%% file: main_arxiv.tex
\documentclass{article}

\usepackage{microtype}
\usepackage{graphicx}
\usepackage{subcaption}
\usepackage{booktabs} % for professional tables

\usepackage{hyperref}
\usepackage{makecell}
\usepackage{enumitem}
\usepackage{multirow}
\usepackage{rotating}
\usepackage{wrapfig}

\usepackage[preprint]{icml2026}

\usepackage{amsmath}
\usepackage{amssymb}
\usepackage{mathtools}
\usepackage{amsthm}

\usepackage[capitalize,noabbrev]{cleveref}

\theoremstyle{plain}
\newtheorem{theorem}{Theorem}[section]

\theoremstyle{definition}

\theoremstyle{remark}

\usepackage[textsize=tiny]{todonotes}

\newcommand{\algname}{MixedDimKV}
\newcommand{\basicalg}{MixedDimKV}
\newcommand{\optalg}{MixedDimKV-H}

\icmltitlerunning{Beyond Token Eviction: Mixed-Dimension Budget Allocation for Efficient KV Cache Compression}

\begin{document}

\twocolumn[
  \icmltitle{Beyond Token Eviction: Mixed-Dimension Budget Allocation \\ for Efficient KV Cache Compression}

  % It is OKAY to include author information, even for blind submissions: the
  % style file will automatically remove it for you unless you've provided
  % the [accepted] option to the icml2026 package.

  % List of affiliations: The first argument should be a (short) identifier you
  % will use later to specify author affiliations Academic affiliations
  % should list Department, University, City, Region, Country Industry
  % affiliations should list Company, City, Region, Country

  % You can specify symbols, otherwise they are numbered in order. Ideally, you
  % should not use this facility. Affiliations will be numbered in order of
  % appearance and this is the preferred way.
  \icmlsetsymbol{equal}{*}

  \begin{icmlauthorlist}
    \icmlauthor{Ruijie Miao}{equal,PKU}
    \icmlauthor{Zhiming Wang}{equal,PKU}
    \icmlauthor{Wang Li}{equal,PKU}
    \icmlauthor{Shiwei Wu}{Seed}
    \icmlauthor{Shufan Liu}{Seed}
    \icmlauthor{Yanbing Jiang}{PKU}
    \icmlauthor{Tong Yang}{PKU}
    %\icmlauthor{}{sch}
    % \icmlauthor{Firstname8 Lastname8}{sch}
    % \icmlauthor{Firstname8 Lastname8}{yyy,comp}
    %\icmlauthor{}{sch}
    %\icmlauthor{}{sch}
  \end{icmlauthorlist}

  \icmlaffiliation{PKU}{Peking University}
  \icmlaffiliation{Seed}{ByteDance Seed}
  % \icmlaffiliation{sch}{School of ZZZ, Institute of WWW, Location, Country}

  \icmlcorrespondingauthor{Tong Yang}{yangtong@pku.edu.cn}
  % \icmlcorrespondingauthor{Firstname2 Lastname2}{first2.last2@www.uk}

  % You may provide any keywords that you find helpful for describing your
  % paper; these are used to populate the "keywords" metadata in the PDF but
  % will not be shown in the document
  \icmlkeywords{Machine Learning, ICML}

  \vskip 0.3in
]

% this must go after the closing bracket ] following \twocolumn[ ...

% This command actually creates the footnote in the first column listing the
% affiliations and the copyright notice. The command takes one argument, which
% is text to display at the start of the footnote. The \icmlEqualContribution
% command is standard text for equal contribution. Remove it (just {}) if you
% do not need this facility.

% Use ONE of the following lines. DO NOT remove the command.
% If you have no special notice, KEEP empty braces:
\printAffiliationsAndNotice{}  % no special notice (required even if empty)
% Or, if applicable, use the standard equal contribution text:
% \printAffiliationsAndNotice{\icmlEqualContribution}

\input{Body/abstract}

\input{Body/1.intro}

\input{Body/2.related}
\input{Body/3.preliminary}

\input{Body/4.solution}

\input{Body/5.implement}
\input{Body/6.experiment}

\input{Body/7.conclusion}

% \section*{Accessibility}

% Authors are kindly asked to make their submissions as accessible as possible
% for everyone including people with disabilities and sensory or neurological
% differences. Tips of how to achieve this and what to pay attention to will be
% provided on the conference website \url{http://icml.cc/}.

% \section*{Software and Data}

% If a paper is accepted, we strongly encourage the publication of software and
% data with the camera-ready version of the paper whenever appropriate. This can
% be done by including a URL in the camera-ready copy. However, \textbf{do not}
% include URLs that reveal your institution or identity in your submission for
% review. Instead, provide an anonymous URL or upload the material as
% ``Supplementary Material'' into the OpenReview reviewing system. Note that
% reviewers are not required to look at this material when writing their review.

% Acknowledgements should only appear in the accepted version.
% \section*{Acknowledgements}

% \textbf{Do not} include acknowledgements in the initial version of the paper
% submitted for blind review.

% If a paper is accepted, the final camera-ready version can (and usually should)
% include acknowledgements.  Such acknowledgements should be placed at the end of
% the section, in an unnumbered section that does not count towards the paper
% page limit. Typically, this will include thanks to reviewers who gave useful
% comments, to colleagues who contributed to the ideas, and to funding agencies
% and corporate sponsors that provided financial support.

% \newpage

\section*{Impact Statement}

This paper presents work whose goal is to advance the field of Machine Learning, and more specifically, the resource-efficiency of LLM inference. 
Our solution facilitates the efficient deployment of LLMs in resource-constrained environments, thereby democratizing access to cutting-edge AI for small organizations and independent researchers.
As a technical optimization for KV cache efficiency, this work does not introduce new ethical concerns or negative societal consequences regarding its deployment or use.

% In the unusual situation where you want a paper to appear in the
% references without citing it in the main text, use \nocite
% \nocite{langley00}

\bibliography{reference}
\bibliographystyle{icml2026}

%%%%%%%%%%%%%%%%%%%%%%%%%%%%%%%%%%%%%%%%%%%%%%%%%%%%%%%%%%%%%%%%%%%%%%%%%%%%%%%
%%%%%%%%%%%%%%%%%%%%%%%%%%%%%%%%%%%%%%%%%%%%%%%%%%%%%%%%%%%%%%%%%%%%%%%%%%%%%%%
% APPENDIX
%%%%%%%%%%%%%%%%%%%%%%%%%%%%%%%%%%%%%%%%%%%%%%%%%%%%%%%%%%%%%%%%%%%%%%%%%%%%%%%
%%%%%%%%%%%%%%%%%%%%%%%%%%%%%%%%%%%%%%%%%%%%%%%%%%%%%%%%%%%%%%%%%%%%%%%%%%%%%%%
\newpage
\appendix
\onecolumn

\input{Body/appendix}

%%%%%%%%%%%%%%%%%%%%%%%%%%%%%%%%%%%%%%%%%%%%%%%%%%%%%%%%%%%%%%%%%%%%%%%%%%%%%%%
%%%%%%%%%%%%%%%%%%%%%%%%%%%%%%%%%%%%%%%%%%%%%%%%%%%%%%%%%%%%%%%%%%%%%%%%%%%%%%%

\end{document}

%% file: Body/abstract.tex
\begin{abstract}
Key-value (KV) caching is widely used to accelerate transformer inference, but its memory cost grows linearly with input length, limiting long-context deployment. 
Existing token eviction methods reduce memory by discarding less important tokens, which can be viewed as a coarse form of dimensionality reduction that assigns each token either zero or full dimension.
We propose \algname{}, a mixed-dimension KV cache compression method that allocates dimensions to tokens at a more granular level, and \optalg{}, which further integrates head-level importance information.
Experiments on long-context benchmarks show that \algname{} outperforms prior KV cache compression methods that do not rely on head-level importance profiling. When equipped with the same head-level importance information, \optalg{} consistently outperforms HeadKV.
Notably, our approach achieves comparable performance to full attention on LongBench with only 6.25\% of the KV cache.
Furthermore, in the Needle-in-a-Haystack test, our solution maintains 100\% accuracy at a 50K context length while using as little as 0.26\% of the cache.
\end{abstract}

%% file: Body/1.intro.tex
\section{Introduction}

% LLM show ability on various tasks, especially long context
% KV cache is required, which has limitations in long contexts

\begin{figure*}[t]
    \centering
    \includegraphics[width=0.9\textwidth]{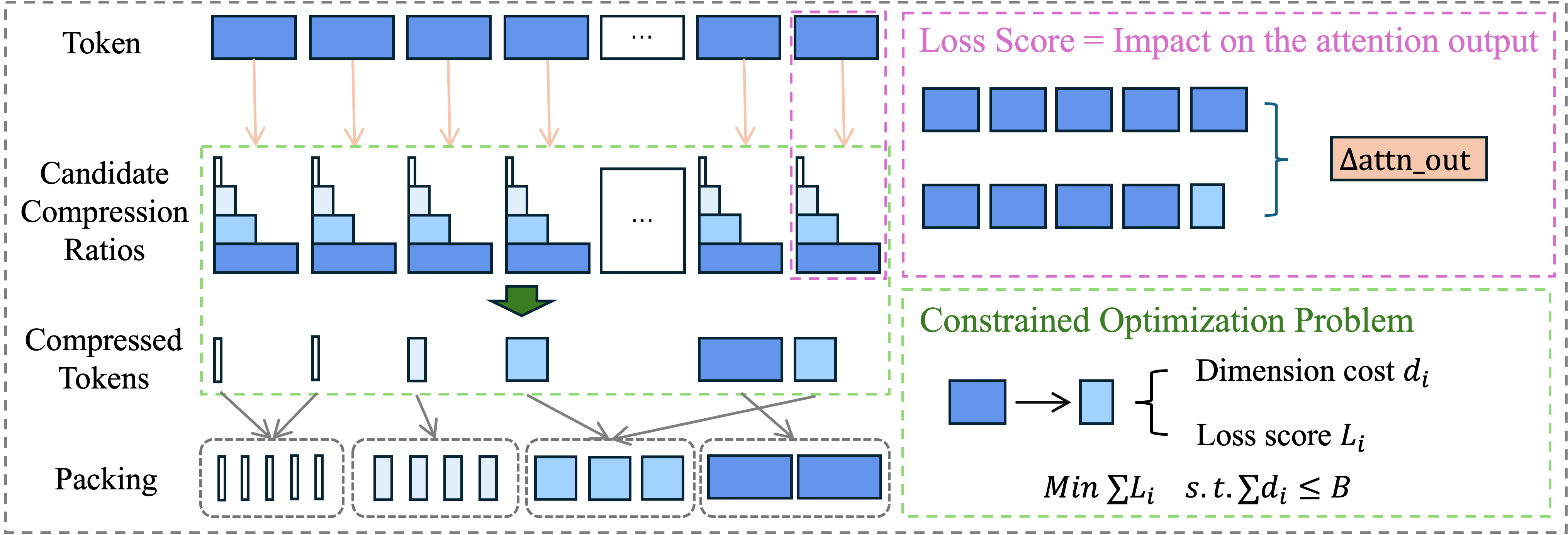}
    \caption{The two-stage dimension allocation process for KV cache compression. Candidate compression ratios are evaluated for each token to compute loss scores. An optimization objective is then applied to minimize total loss under the specified memory budget.}
    \label{fig:overview}
\end{figure*}

Support for long context is emerging as a core capability of large language models (LLMs) \cite{gemini-2.5, qwen3, glm-4.5}, enabling complex tasks like information retrieval from large-scale documents.
However, this capacity is often constrained by the KV cache, which stores the key-value states of the previous tokens to avoid re-computation.
As the sequence length grows, the KV cache size increases linearly, leading to a massive memory footprint and memory access overhead during the decoding stage, which ultimately results in poor inference efficiency.

Research community has explored tremendous methods for KV cache compression \cite{duoattention, magicpig, razorattention, vl-cache}. 
Token eviction \cite{h2o, streamingllm, snapkv} is one of the representative methods due to its simplicity, feasibility, and effectiveness.
The key idea is to maintain the ``important'' tokens in the cache and evict ``unimportant'' ones.
Recently, some works \cite{palu, eigen-attention, xkv} have explored the possibility of KV cache compression via dimensionality reduction, which compress the keys and values to a fixed dimension.
Our inspiration comes from such a perspective: in the token eviction scheme, the evicted tokens can be considered as compressed to 0 dimensions, and the maintained tokens can be considered as maintaining the full dimensions.
Therefore, the token eviction scheme can be regarded as a special case for dimensionality reduction scheme, where each token is compressed to either 0 dimension or full dimension.
Then a natural question arises:
\textit{Can we allocate dimensions to tokens at a more granular level to maximize information retention under a fixed budget?}

% We first investigate the low-rank properties of keys and values under different methods.
% Considering both the compression efficiency and computational efficiency during inference, we choose to conduct dimension reduction for keys with combined heads, and conduct dimension reduction for individual head of values separately.
% In order to assign dimensions for different tokens, we compute a loss score to evaluate the performance drop when the corresponding key and value of a token is compressed to specified dimensions.
% Then the dimension assignment problem is converted to a constrained optimization problem: 
% given a total dimension budget, how to allocate dimensions so that the total loss is minimized?
% Utilizing the monotonicity of the accuracy loss related to compression dimension, we use Lagrange multipliers for the budget constraint and apply a binary search algorithm to efficiently computing the optimal solution.

% Figure: show how dimension allocation means, real example
We propose \algname{}, a novel KV cache compression scheme that enables fine-grained budget allocation.
As illustrated in \cref{fig:overview}, our framework operates on a pre-defined set of candidate compression ratios, specifying the fractional budgets for keys and values.
Then our solution efficiently prunes the KV cache by assigning more dimensions to keys and values that are most sensitive to information loss.
To utilize every bit of budget space as efficiently as possible while considering inference computational efficiency, we have made the following designs:

\textbf{Measuring information loss via attention output.}
To determine the optimal dimensionality for each token, we establish a scoring metric that quantifies the deviation induced by compression. Our methodology is centered on the joint impact of a token's attention significance, value magnitude, and inherent compressibility across varying dimensions.
While traditional methods often rely on attention weights to select tokens, which only consider the impact of keys, our approach evaluates the attention output directly to capture how these three factors interact to determine actual information loss.
Based on this analysis, we design our metric as a conservative estimation, specifically the upper bound of the deviation of the attention output, to ensure maximum information retention for critical tokens. 
This formulation is further optimized for efficient batched computation, ensuring that our fine-grained dimension allocation remains scalable even for extremely long sequences.
% Crucially, while a naive computation of this deviation for $N$ tokens incurs an $O(N^2)$ complexity, we introduce a highly efficient approximation that reduces the cost to $O(N)$.
% By decomposing the global attention deviation into a localized impact score, we eliminate the need for redundant re-calculations.
% This ensures our fine-grained dimension allocation remains scalable even for extremely long sequences.

% We evaluate the importance by computing a loss score for each token compressed to each optional compression ratio.
% Previous works on token eviction usually evaluate the importance by the attention weights, e.g., higher attention weights means higher importance of tokens.
% By such method, only keys take effect and the influence of values is missing.
% We instead choose to evaluate the change of attention output after the dimension reduction.
% The change of attention output takes into account both the the keys and values, and the low-rank property of each token is also considered.
% Therefore, it is more suitable for our multi-dimensional compression scheme.

\textbf{Optimal dimension allocation via bisection search.}
Leveraging the computed loss scores, we formulate the dimension allocation task as a constrained optimization problem, aiming to minimize the aggregate accuracy loss under a memory budget. 
While such discrete problems are generally challenging, we show that in long-context settings, the duality gap between the primal problem and its Lagrangian dual becomes negligible. 
By exploiting the monotonic relationship between allocated dimensionality and accuracy loss, we employ an efficient bisection-based search to determine the optimal per-token dimensions.

% In cases where no head-level importance is given, we search across all heads for inter-head budget allocation. 
% When integrated with pre-defined head-level budgets, we propose \optalg{}, which performs intra-head refinement to further optimize the dimensionality within each head.

% K - head combine; V - head separate
% loss score
% dimension assignment

In implementing \algname{}, we favor head-wise compression over joint-head compression to strike a balance between allocation flexibility and computational efficiency. Furthermore, to ensure compatibility with established head-level importance estimation schemes, we propose a variant, \optalg{}.
On LongBench and RULER, \basicalg{} significantly outperforms prior methods without head-level profiling, while \optalg{} consistently exceeds HeadKV when using the same head-level information.
Notably, our approach achieves comparable performance to full attention on LongBench with only 6.25\% (KV size = $512$) of the KV cache.
In the Needle-in-a-Haystack test, our solution maintains 100\% accuracy at a 50K context length while using as little as 0.26\% (KV size = $128$) of the cache. Furthermore, our solution is highly efficient, reducing per-token decoding latency to 55\% of full attention.

%% file: Body/2.related.tex
\section{Related Works}

\textbf{Token eviction.}
A common KV cache compression strategy is to analyze token importance and evict the less important ones.
StreamingLLM \cite{streamingllm} marks the initial tokens and the tokens in the recent window as important.
H2O \cite{h2o} identifies important tokens via cumulative attention scores and dynamically updates KV cache.
SnapKV \cite{snapkv} scores the token importance based on the attention score with an observation window.
PyramidKV \cite{pyramidkv} ranks tokens in the lower layer as more important and allocates more budget.
HeadKV \cite{headkv} further evaluates the importance of individual heads and incorporates these estimates into KV cache allocation process.

\textbf{Dimension reduction.}
Several previous works have recognized the low-rank nature of KV cache and leveraged it for compression.
Palu \cite{palu} proposes a low-rank decomposition for the linear projection matrices, which naturally reduces the hidden dimension of KV cache.
Eigen Attention \cite{eigen-attention} conducts low-rank decomposition on KV cache and proposes a layer-wise rank allocation algorithm to decide each layer's compression ratio.
ShadowKV \cite{shadowkv} conducts low-rank decomposition for pre-RoPE keys, which requires reconstructing them during the decoding stage.
None of these works has considered assigning different ranks to tokens with varying levels of importance.

\textbf{Sparse attention.}
Some works store the entire KV cache but dynamically load a small portion of KV during the decoding stage.
Quest \cite{quest} approximates keys by the channel-wise minimal and maximal values and selects important pages of tokens by estimating attention weights.
SparQ \cite{sparq} selects important channels of keys to estimate attention weights, and then fetches keys and values with high contribution.
PQCache \cite{pqcache} applies Product Quantization techniques to retrieve top-k relevant key-value pairs.

\textbf{Quantization.}
Many works compress KV cache by reducing the bit width.
KVQuant \cite{kvquant} applies several techniques to achieve a 3-bit quantization.
PolarQuant \cite{polarquant} encodes KV caches with quantized radii and angles.
While these works achieve good performance, they are orthogonal to our method.

%% file: Body/3.preliminary.tex
\section{Preliminary}

\subsection{Attention Mechanism and KV Cache}

In transformer-based models, the multi-head attention (MHA) module with $H$ heads projects the input sequence of embeddings into $H$ sets of queries, keys, and values.
The attention output for each head $i$ is computed as:
$$Attn(Q_i, K_i, V_i) = softmax(\frac{Q_iK_i^\top}{\sqrt{d}})V_i$$

During the decoding stage, to avoid excessive recomputation for the keys and values of previous tokens, the key-value pairs are cached.
The memory overhead of KV cache scales linearly with the number of heads and sequence length, leading to prohibitive memory costs.

To mitigate this, grouped-query attention (GQA) \cite{gqa} partitions the $H$ query heads into $G$ groups. 
Within each group, multiple queries share one key and value head.
By reducing the number of KV heads from $H$ to $G$, the KV cache size is substantially reduced.
Many recent LLMs \cite{qwen3, mistral7b, llama3} have applied GQA architecture in their models.

\subsection{Principal Component Analysis}
We employ Principal Component Analysis (PCA) \cite{pca} algorithm to compress the KV cache.
Specifically, for a key matrix $K \in \mathbb{R}^{n\times d}$, PCA performs singular value decomposition on the covariance-like matrix $K^\top K / n$, yielding $K^\top K / n = U \Sigma U^\top$.
% Here $\Sigma$ is a diagonal matrix, with singular values arranged in descending order.
To achieve a $\rho$ compression ratio, we construct a projection matrix $P_K = U_r\in \mathbb{R}^{d\times r}$, where $U_r$ consists of the first $r = d \cdot \rho$ columns of $U$ corresponding to the largest eigenvalues.
The original key $K$ is projected to a lower-dimensional representation $K_c =K\cdot P_K \in \mathbb{R}^{n\times r}$.
Similarly, the value matrix $V$ is compressed into $V_c = VP_V$ using its respective projection matrix $P_V$.
Our method reduces the KV cache storage from $2nd$ to $2nr$, and also introduces a fixed overhead of $2dr$ to store the projection matrices.

During inference, we first project the query $q$ into the $r$-dimensional subspace: $\hat{q}=qP_K$. 
The attention weights are then efficiently computed in this reduced space.
After obtaining the weighted sum of compressed values $V_c$, we project the result back to the original $d$-dimensional space.
The complete process is formulated as
\[
Attn(q, K_c, V_c) =softmax(\frac{qP_KK_c^\top}{\sqrt{d}})V_c P_V^\top
\]

%% file: Body/4.solution.tex
\section{Solution}

%Our solution presents a multi-dimensional compression for KV cache.
% As shown in \cref{fig:overview}, our solution assigns a latent dimension to each token from a pre-defined set of candidate compression ratios.
% Ideally, the tokens with higher importance and lower potential for compression should be set to higher dimensions, while the total KV cache consumption should stay within the budget.
% We present a simple yet effective method to solve the dimension allocation problem.
% We compute an accuracy loss score for each token, according to the change in attention output when the token is compressed to a specific candidate dimension.
% Then we optimize the total accuracy loss while ensuring the KV cache budget constraint.
% In \cref{sec:alg:score}, we explain our estimation for the accuracy loss for each token compressed to each optional dimension.
% In \cref{sec:alg:allocation}, we discuss how the latent dimension assignment problem is converted to a constrained optimization problem, and elaborate an efficient algorithm to search for the optimal assignment.
To process the KV cache of a user prompt, we partition the prompt into a local window $Q$ consisting of the last $\alpha$ tokens and the remaining tokens.
The local window is fully retained (i.e., $100\%$ compression ratio), while the remaining tokens are assigned compression ratios from a predefined candidate set. 
To maximize performance under a KV cache budget, we prioritize higher dimensions for important, less compressible tokens. 
For dimension reduction, we apply PCA to each head independently rather than across all heads jointly, owing to the superior efficiency and flexibility (\cref{sec:alg:head-wise}).
We quantify the impact of compression by computing an accuracy loss score, defined as the deviation in attention output, for each token-dimension pair (\cref{sec:alg:score}). 
The dimension allocation is then formulated as a constrained optimization problem, which we solve using an efficient search algorithm to find the optimal assignment (\cref{sec:alg:allocation}).

\subsection{Head-wise Compression}
\label{sec:alg:head-wise}

\begin{table}
    \caption{Comparison of head-wise and joint-head compression. Head-wise compression reduces projection overhead and enhances inference efficiency. Its support for head-level budget allocation ensures competitive compression performance.}
  \label{tab:compare-head-compress}
  \setlength{\tabcolsep}{3pt}
  \begin{center}
      \begin{small}
        \begin{tabular}{lccc}
          \toprule
          \textbf{Granularity} & 
            \makecell{Projection Matrices \\ Overhead} & 
            \makecell{Computational \\ Efficiency} & 
            \makecell{Compression \\ Quality} \\
          \midrule

        \textbf{Joint-head} & $\times$ & $\times$ & \checkmark \\
        \textbf{Head-wise} & \checkmark & \checkmark & \checkmark \\
          
          \bottomrule
        \end{tabular}
      \end{small}
  \end{center}
\end{table}

Let $H_{kv}$ be the number of KV heads, $N$ be the number of tokens in the KV cache, and $D$ be the head dimension.
We denote $\mathrm{X} \in \mathbb{R}^{H_{kv}\times N\times D}$ as either the key or value cache.
There are two ways to conduct PCA for $\mathrm{X}$.
\begin{enumerate}[leftmargin=*]
    \item \textbf{Head-wise compression.} We make dimensional reduction for each head individually.
    On each head, we compress dimensions from $D$ to a specific ratio.

    \item \textbf{Joint-head compression.} Heads are concatenated to form $X_{joint} \in \mathbb{R}^{N\times (H_{kv}\cdot D)}$.
    We compress dimensions from $H_{kv}\cdot D$ to a specific ratio.
\end{enumerate}

Although prior work \cite{palu} has observed a better low-rank property for joint-head compression, it may not lead to overall performance improvement.
As summarized in \cref{tab:compare-head-compress}, head-wise compression consistently exceeds or matches joint-head compression across multiple aspects.
We justify this choice through the following three factors.

First, head-wise compression is more memory-efficient regarding projection matrices.
For a compression ratio $\rho$, head-wise compression brings an overhead of $H_{kv} D^2\rho$, while joint-head compression requires $(H_{kv}D)^2\rho$, which is $H_{kv}\times$ higher. 
For a GQA model ($H=32$, $H_{kv}=8$, $D=128$) at a compression ratio of $\rho=25\%$, the joint-head overhead is equivalent to storing $64$ full tokens per layer per head, which is prohibitive for constrained budget settings (e.g., 128 tokens).

% with the same compression ratio $\rho$, joint-head compression has much higher memory overhead for the projection matrix $P$.
% The memory consumption of head-wise projection matrix is $(\rho\cdot D)\cdot D\cdot H_{kv}$, while that of the joint-head projection matrices is $(\rho \cdot H_{kv}\cdot D)\cdot (H_{kv}\cdot D)$, which is $H_{kv}\times$ higher.
% When the KV budget is limited, joint-head compression can be worse than the head-wise compression due to the fixed overhead of projection matrices.
% For instance, suppose the target model has $32$ attention heads, which are grouped into sets of $4$ using GQA, with a head dimension of $128$.
% If one optional compression ratio is $25\%$, then for joint-head compression the overhead of projection matrix is equivalent to storing $128$ full tokens per head and per layer, while head-wise compression consumes $16$ tokens.
% Such large overhead renders the performance unsatisfactory under the small budget settings (e.g., $256$, $128$) commonly adopted in existing studies.

Second, joint-head compression brings higher computation complexity.
Since attention is a head-level operation, joint-head compression requires either reconstructing the full KV cache, which negates any efficiency gains, or operating on a joint subspace. 
For example, at $\rho=25\%$ with $8$ heads, the joint dimension becomes $2\times$ the original head dimension, increasing rather than decreasing the per-head computational burden.

Third, and most importantly, joint-head compression enforces uniform behavior across all heads, ignoring the non-uniform importance across heads.
Recent work \cite{headkv} has pointed out that allocating non-uniform budgets to different heads can yield better performance.
Head-wise compression naturally supports this spatial adaptivity, whereas the gains from joint-head low-rankness are often canceled out by the lack of head-level adaptivity, which is confirmed by experiments in \cref{sec:exp:ablation}.
% Actually, even with a larger budget where the projection overhead becomes marginal, joint-head compression does not show advantage in compression quality (\cref{sec:exp:ablation}).

\subsection{Accuracy Loss Score}
\label{sec:alg:score}

To assess the impact of compressing a specific token, we employ an analysis that isolates the error introduced by the $t^{th}$ token, assuming all other tokens remain unchanged.
% We estimate $\sum_{q\in Q} \|\Delta(p_{q} V)\|_2 $, where $p_{q}$ refers to the attention scores of query $q$, and $\Delta(p_{q} V)$ indicates the change of attention output for query $q$ caused by the compression error of the $t^{th}$ token.
We target at the change of attention output for queries in the local window $Q$ and formulate it as $\sum_{q\in Q} \|p'_{q,t}V_t'-p_{q}V\|_2 $, where $p_{q,t}$ refers to the attention scores for query $q$ with the changed $t^{th}$ token, and $V_t'$ refers to the values with changed $t^{th}$ entry.
Given that the local window provides only a partial sampling of the queries, we adopt a conservative score estimation to ensure that critical tokens retain their original information as much as possible.
We consider the upper bound 
$$\sum_{q\in Q} \|\Delta p_{q,t}  V\|_2 +  \|p'_{q,t} \Delta V\|_2$$
To simplify the computation and facilitate efficient batched processing, we consider all tokens to be compressed at a specific ratio, yielding attention score $p'_{q}$ for query $q$.
Due to the low-rankness of the key-values and the sparsity of attention scores, we regard the $p'_{q}$ to be a slight perturbation of $p'_{q,t}$.
Therefore, we define the scoring metric as 
$$L_t = \sum_{q\in Q} \|  (p'_{q}-p_q)^{(t)} \cdot  V^{(t)}\|_2 +  \|p_{q}^{(t)}\cdot  (V' - V)^{(t)}\|_2 $$

\newcommand{\mycomment}[1]{\hfill \textcolor{gray}{// #1}}

\begin{algorithm}[tb]
  \caption{Loss Score Computation}
  \label{alg:loss-score}
  \begin{algorithmic}
    \STATE {\bfseries Input:} Key $K \in \mathbb{R}^{N\times d}$, Value $V\in \mathbb{R}^{N\times d}$, Query $Q\in \mathbb{R}^{M\times d}$, compression ratio $\rho$, head dimension $d$
    \STATE {\bfseries Output:} Loss score vector $\mathbf{L} \in \mathbb{R}^N$
    
    \STATE $P \gets Softmax(\frac{QK^\top}{\sqrt{d}})$ \mycomment{$P \in \mathbb{R}^{M \times N}$} 
    
    \IF{$\rho == 0\%$}
        \STATE $E \gets 2 \cdot P \odot \|V\|_2 $ \\
    \ELSIF{$\rho == 100\%$}
        \STATE $E \gets \mathbf{0}_{M \times N}$ \\
    \ELSE
    \STATE $K', V' \gets Compressed(K, V, \rho)$ \\
    \STATE $P' \gets Softmax(\frac{QK'^\top}{\sqrt{d}})$ \\
    \STATE $E \gets |P' - P| \odot \|V\|_2 + P \odot \|V - V'\|_2$ \\
     \ENDIF
    \STATE $\mathbf{L} \gets \sum_{i=1}^M E[i, :]$ \mycomment{Sum across queries}
  \end{algorithmic}
\end{algorithm}

Then we can efficiently compute the scores for the entire input sequences in batches.
As shown in \cref{alg:loss-score}, we first compute the error matrix $E = |P' - P| \cdot \|V\|_2 + P \cdot \|V - V'\|_2$, where $E_{ij}$ estimates the impact of the $j$-th token on the $i$-th query's output.
Then we compute the loss score by summing across queries, i.e., $L = \sum_{i=1}^M E[i, :]$.
Our algorithm also accounts for two boundary cases where the compression ratio $\rho$ is either $100\%$ or $0\%$.
When $\rho = 100\%$, i.e., no compression is applied, we simply set the error matrix to zero.
When $\rho=0\%$, i.e., evicted tokens, we zero out both the attention weights and the value vectors, yielding an error matrix of $2 \cdot P \odot \lVert V \rVert_2$.
% A special case is to compress the token to $0$ dimension, e.g., evict tokens.
% In such case, we set the corresponding softmax-normalized attention weight to $0$, and the corresponding value entry to the zero vector.
% $$\hat{L} = 2\sum_{q\in Q}p_q \odot norm(V) $$
% Another special case is to make no compression, e.g., the compressed dimension is equal to the original head dimension.
% In such case, the loss score is set to zero vector.

\textbf{Generalization of Token Eviction.}
SnapKV \cite{snapkv} focuses on token eviction by retaining only those tokens with the highest attention weights within an observation window.
If we set the candidate compression ratios to $\{0\%, 100\%\}$, our solution also functions as token eviction.
In such a case, to minimize the total loss score, tokens with highest $\sum_{q\in Q} p_{q}^{(t)} \|V^{(t)}\|_2$ are maintained.
Compared to SnapKV, our accuracy loss score incorporates both attention importance and value magnitudes, offering a more nuanced sensitivity metric for cache management.
% As we discussed before, previous work on token eviction can be regarded as a special case of our multi-dimensional compression with candidate compression ratios $(0\%, 100\%)$.
% The representative work, SnapKV \cite{snapkv} uses an observation window for query, and maintain tokens with highest attention weights.
% SnapKV \cite{snapkv} uses an observation window of query entries to compute the attention weights for remaining tokens, and maintain tokens with highest attention weights. 
% In our solution, if we set candidate compression ratios $(0\%, 100\%)$, to minimize the total loss score, tokens with highest $p_{q,t} v_t$ are maintained, and the difference is that we consider both the attention weights and the value norm.
% Therefore, the design of our accuracy loss score can be regarded as an extension to previous token eviction works.

% We present the time complexity of computing loss score for one dimension $u$.
% Consider one layer of KV cache, where the sequence length is $L$, the number of KV heads is $H_{kv}$, and the head dimension is $D$.
% Suppose the query window size for $Q$ is $L_q$
% Computing the attention scores for compressed keys has the time complexity of $O(HL_qDu + HLDu+HL_qLu)$.

\subsection{Latent dimension allocation}
\label{sec:alg:allocation}

% \begin{algorithm}[tb]
%   \caption{Dimension Allocation}
%   \label{alg:alloc}
%   \begin{algorithmic}
%     \STATE {\bfseries Input:} Loss $L$, Dimension set $D$, Budget $B$
%     \STATE {\bfseries Output:} Allocation scheme $A$ 
    
%     \STATE Initialize $\lambda_{min} \gets  0, \lambda_{max} \gets 1e6$. \\
%     \WHILE{True}
%     \STATE $\lambda \gets (\lambda_{min} + \lambda_{max})/2$ \\
%     \STATE $score  \gets L + \lambda \cdot D$ \\
%     \STATE $A \gets argmin(score)$ \\
%     \STATE $C \gets ConsumedBudget(A)$ \\
%     \IF{$C == B$}
%         \STATE Return $A$ \\
%     \ELSIF{$C > B$}
%         \STATE $\lambda_{min} \gets \lambda$ \\
%     \ELSE
%         \STATE $\lambda_{max} \gets \lambda$ \\
%      \ENDIF
%     \ENDWHILE
%   \end{algorithmic}
% \end{algorithm}

With token-wise loss scores computed, we then select the optimal dimension for each token to minimize the total loss while adhering to the KV cache budget.
Let $\mathcal{D}$ denote the set of candidate dimensions, $d_i \in \mathcal{D}$ be the dimension allocated to the $i$-th token, $L_i(d_i)$ be the corresponding accuracy loss, and $B$ be the total budget. 
The allocation is formulated as a constrained discrete optimization problem:
\begin{equation}
    \min_{\{d_i \in \mathcal{D}\}_{i=1}^N} \sum_{i=1}^N L_i(d_i) \quad \text{s.t.} \quad \sum_{i=1}^N d_i \le B.
    \label{eq:allocation_primal}
\end{equation}

Although the problem is non-convex due to the discrete domain, the primal–dual gap is negligible, especially for long-context scenarios.
% By the Shapley-Folkman Lemma~\cite{starr1969quasi}, the relative duality gap between the primary problem and its Lagrangian dual diminishes at a rate of $O(1/N)$.

\begin{theorem}
    \label{theory:dual-gap}
    The gap between \cref{eq:allocation_primal} and its Lagrangian dual is bounded by a small constant $\Delta$ independent of the sequence length $N$.
\end{theorem}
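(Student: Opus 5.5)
The plan is to analyze the Lagrangian dual directly and exploit the separable structure of \cref{eq:allocation_primal}. For a multiplier $\lambda \ge 0$, the Lagrangian is
\[
\mathcal{L}(d,\lambda) = \sum_{i=1}^N \bigl(L_i(d_i) + \lambda d_i\bigr) - \lambda B,
\]
and it decouples across tokens. The dual function is therefore $g(\lambda) = \sum_i \min_{d\in\mathcal{D}} (L_i(d) + \lambda d) - \lambda B$. Equivalently, $g(\lambda)$ equals the Lagrangian of the \emph{convexified} problem, in which each $L_i$ is replaced by its lower convex envelope $\breve L_i$ on $[\min\mathcal{D}, \max\mathcal{D}]$. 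This holds because the conjugate of $L_i$ restricted to $\mathcal{D}$ coincides with that of $\breve L_i$. The relaxed problem is convex with a linear constraint and is feasible whenever the discrete one is. Slater's condition holds (or the constraint is polyhedral), so strong duality gives $\max_\lambda g(\lambda) = p^{\mathrm{cvx}}$, the optimum of the relaxation. It thus suffices to show $p^\star - p^{\mathrm{cvx}} \le \Delta$, where $p^\star$ is the discrete optimum.

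The main step is a Shapley--Folkman style rounding argument. Take an optimal relaxed solution $x^\star$, which without loss of generality lies on the envelope, so each $x_i^\star$ is either a point of $\mathcal{D}$ or lies between two consecutive envelope breakpoints $a_i < b_i$ in $\mathcal{D}$. The relaxed problem has only one coupling constraint. By a basic-solution or extreme-point argument on the LP obtained by writing each $x_i$ as a convex combination of the breakpoints of $\breve L_i$, I would choose $x^\star$ so that at most one coordinate $i_0$ is fractional. Equivalently, consider the greedy/water-filling structure: at the optimal $\lambda^\star$, every token except possibly one sits at a breakpoint minimizing $L_i(d)+\lambda^\star d$. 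For the non-fractional coordinates, $\breve L_i(x_i^\star) = L_i(x_i^\star)$.

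Next I round $x_{i_0}^\star$ down to $a_{i_0}$, which preserves feasibility because it only decreases the budget usage. The resulting discrete solution has cost
\[
p^{\mathrm{cvx}} + L_{i_0}(a_{i_0}) - \breve L_{i_0}(x_{i_0}^\star) \le p^{\mathrm{cvx}} + \max_i \bigl(L_i(\min\mathcal{D}) - L_i(\max\mathcal{D})\bigr).
\]
This uses the monotonicity of $L_i$ in the allocated dimension (the fact the paper invokes). The right-hand side is the constant $\Delta := \max_i \max_{d,d'\in\mathcal{D}}|L_i(d)-L_i(d')|$. Since every $L_i$ is built from attention weights $P\in[0,1]$ summed over the fixed window of $\alpha$ queries and bounded value norms, we get $\Delta \le 2\alpha \max_t \|V^{(t)}\|_2$, which is independent of $N$. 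Hence $p^\star - \max_\lambda g(\lambda) \le \Delta$.

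I expect the obstacle to be the claim that only one coordinate needs to be fractional. Ties in $L_i(d)+\lambda^\star d$ across many tokens can produce many simultaneously fractional coordinates. I would handle this with the LP vertex argument: the relaxed problem in convex-combination weights has $N$ simplex constraints plus one budget row, so a vertex has at most $N+1$ positive weights, and hence at most one token is split. If that becomes awkward, the fallback is to cite the Shapley--Folkman lemma with a single constraint, which gives the same bound of one nonconvexity term.
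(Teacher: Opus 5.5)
Your argument is correct. It amounts to a self-contained proof of the one-constraint case of the result the paper uses as a black box.

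\textbf{What the paper does.} It cites the Shapley--Folkman--Starr theorem to get $P^*-D^*\le\max_i\rho(L_i)$, where $\rho(L_i)$ is the nonconvexity of a single token's loss. It then declares this bound $N$-independent because it concerns only one token.

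\textbf{What you do.} You identify $D^*$ with the value of the LP over convex-combination weights, using $\min_{d\in\mathcal D}(L_i(d)+\lambda d)=\min_{x\in\mathrm{conv}(\mathcal D)}(\breve L_i(x)+\lambda x)$ and LP duality on the budget row. You then use the vertex count ($N$ simplex rows plus one budget row) to get at most one split token, and round it down.

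\textbf{What your route buys.} First, it makes explicit what ``nonconvexity'' must mean for a function defined only on the finite set $\mathcal D$: the cost of moving the split token to a feasible grid point. The paper's ``deviation from the convex envelope on $\mathrm{conv}(\mathcal D)$'' is either infinite (under the usual $+\infty$ extension off $\mathcal D$) or, if evaluated only on $\mathcal D$, can vanish while the gap is positive. For example, one token with $\mathcal D=\{0,2\}$, $L(0)=1$, $L(2)=0$, $B=1$ gives $P^*=1$ and $D^*=1/2$. Second, you obtain a concrete constant $2\alpha\max_t\|V^{(t)}\|_2$. It follows from $|Q|=\alpha$, attention weights lying in $[0,1]$, and $\|(V-V')^{(t)}\|_2\le\|V^{(t)}\|_2$ for an orthogonal PCA projection, with value norms bounded by a model constant.

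\textbf{What it costs.} Your constant is cruder: the full range of $L_i$ rather than its nonconvexity. Your argument is also tailored to a single linear constraint, whereas the cited theorem extends directly to several coupled budgets.

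\textbf{Two points for the write-up.} First, lead with the weight-space vertex argument rather than the $x$-space/water-filling picture. In $x$-space, your claim that non-fractional coordinates satisfy $\breve L_i(x_i^\star)=L_i(x_i^\star)$ fails when $x_i^\star\in\mathcal D$ lies strictly above the envelope. In weight space, LP optimality forces each token's weights to attain $\breve L_i(x_i)$, so the claim is automatic. Second, monotonicity is not actually needed. Rounding down is feasible regardless, and $\breve L_{i_0}\ge\min_{\mathcal D}L_{i_0}$ already bounds the rounding cost by the range of $L_{i_0}$.
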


We defer the proof and related discussion in \cref{app:dual-gap}.
Therefore, we consider its Lagrangian dual:

\begin{equation}
    \max_{\lambda \ge 0} \min_{\{d_i\}} \left( \sum_{i=1}^N L_i(d_i) + \lambda \left( \sum_{i=1}^N d_i - B \right) \right)
    \label{eq:allocation}
\end{equation}

For a fixed $\lambda$, the optimization decouples into $N$ independent sub-problems:
\begin{equation}
    d_i^*(\lambda) = \arg\min_{d \in \mathcal{D}} \big( L_i(d) + \lambda d \big)
\end{equation}
We adopt the natural assumption that $L_i(d)$ is monotonically decreasing with respect to $d$, as higher-dimensional representations typically preserve more information. 
Consequently, a higher $\lambda$ encourages selecting a smaller $d$, making the total budget consumption $C(\lambda) = \sum_i d_i^*(\lambda)$ a monotonically non-increasing step function of $\lambda$.
%We utilize the monotonicity to efficiently search for the optimal $\lambda^*$ via the bisection method.
This monotonicity allows us to efficiently find the optimal $\lambda^*$ via bisection search. 
In each iteration, we update $\lambda$ and determine the corresponding $\{d_i^*\}$ until the aggregate budget $C(\lambda)$ converges to the target $B$.

\subsection{Inter-head vs. Intra-head Optimization}

Our proposed framework is highly versatile and can operate under two distinct settings depending on the availability of external information.

If no external information on the importance of layers and heads is given, the basic \basicalg{} distributes the budget evenly across all layers.
The optimization problem in \cref{eq:allocation_primal} is formulated across all heads and tokens within a single layer. 
By solving this problem, the basic \basicalg{} inherently enables joint budget allocation at both the inter-head and inter-token levels.
If some heads have a higher impact on the attention output, they will be allocated a higher budget.

\algname{} is also compatible with existing methods that provide explicit importance of layers and heads.
HeadKV \cite{headkv} profiles head-level importance by contextual QA tasks, and partitions the total budget among heads of all layers according to their estimated saliency.
We can retain the head-level quotas established by HeadKV, while employing our mixed-dimension optimization scheme to refine the token-wise distribution within each head.
Under this setting, the optimization problem in \cref{eq:allocation} is formulated independently for each head, focusing on intra-head token selection.
We denote this variant as \optalg{}.

%% file: Body/5.implement.tex
\section{Implementation}
\label{sec:implement}

In this section, we discuss the practical implementation of our solution, focusing on memory layout and projection matrix management.

% We reorganize the compressed KV cache by packing tokens with identical dimensions into unified groups. 
% This ensures that each group forms a contiguous tensor, enabling efficient vectorized operations.
% During inference, attention is computed within each group and subsequently aggregated. 
% Although this rearranges the physical memory layout, the mathematical correctness is preserved as positional information (e.g., RoPE) is inherently encoded within the cached representations.

To optimize the memory layout and the computational efficiency, we reorganize the compressed KV cache by packing tokens with identical dimensions into contiguous tensors. 
This layout enables vectorized attention kernels to operate on each group independently, with the results aggregated subsequently.
To avoid excessive memory fragmentation and kernel launch overhead, we limit the number of candidate ratios.
Although this rearranges the physical memory, mathematical correctness is preserved as the positional information (e.g., RoPE) is already explicitly embedded within the representations.

% Another issue is the storage of projection matrices. 
% As we construct projection matrices by truncating the decomposed matrices of $K^\top K/N, V^\top V/N$, we can retain the projection matrix for the maximum candidate dimension (except the full dimension).
% Any projection matrix for lower dimension can be obtained by simply truncating the maximum projection matrices.
Regarding projection storage, we exploit the nested structure of our matrices. 
Since projection bases are derived from the eigenvectors of $K^\top K/N$ and $V^\top V/N$ through truncation, the basis for a lower dimension is a subset of the higher-dimension one. 
Consequently, we only store the maximum-rank projection matrix (for the largest $\rho < 100\%$). Any lower-rank matrix is obtained by simply slicing the maximum matrix, reducing the fixed memory footprint.

%% file: Body/6.experiment.tex
\section{Experiments}

\input{Exp/setup}
\input{Exp/main-results}

\input{Exp/niah}
\input{Exp/efficiency}

\input{Exp/ablation}

%% file: Exp/setup.tex
\subsection{Setup}

\textbf{Models and Datasets.} 
We conduct experiments on open-sourced state-of-the-art LLMs: Llama-3-8B-Instruct \cite{llama3}, the long-context finetuned variant Llama-3-8B-Instruct-Gradient-1048K \cite{llama3-8b-1048k}, and Mistral-7B-Instruct-v0.3 \cite{mistral7b}.
We compare our solution with baselines on representative benchmarks, including Longbench \cite{longbench-v1}, RULER \cite{ruler}, and Needle-in-a-Haystack \cite{niah}.
The experiments are conducted on NVIDIA A100 80GB GPUs.
% For llama-3-8B, we only use Llama-3-8B-Instruct-Gradient-1048K for the Needle-in-a-Haystack test to evaluate the performance of extremely long context ($>50K$).

\textbf{Baselines.}
We use the following baselines:
\begin{itemize}[nosep, leftmargin=*]
    % \item StreamingLLM (SLLM) \cite{streamingllm} maintains the initial tokens and the last $\alpha$ tokens.
    \item H2O \cite{h2o} applies an update policy to maintain a set of heavy-hitter tokens.
    % \item Palu \cite{palu} compresses KV cache by conducting low-rank decomposition for linear projection matrices.
    \item SnapKV \cite{snapkv} uses an observation window consisting of the last $\alpha$ tokens to compute attention scores of the remaining tokens. Tokens with a higher sum of attention scores are maintained in the KV cache.
    \item PyramidKV \cite{pyramidkv} allocates more budget to lower layers and less budget to higher layers, and within each layer it applies SnapKV for KV selection.
    \item HeadKV \cite{headkv} allocates budget to each head according to the head's importance, and for each head, it applies SnapKV for KV selection.
\end{itemize}

Both our solution and many of the baselines will sample $\alpha$ last tokens for further KV cache compression, and for fairness we set $\alpha$ the same for all solutions in one experiment.
PyramidKV involves a parameter $\beta$ to control the layer-wise budget allocation, and we set $\beta=20$ as recommended in its paper.
HeadKV involves a parameter $\beta$ to control the head-wise budget allocation, and we set $\beta=1.1$ for both HeadKV and \optalg{}, which is one of the recommended parameter settings in its paper.
When computing the KV budget consumption, the projection matrices are also counted in our solution.
We set the candidate compression ratios to $\{ 0\%, 12.5\%, 25\%, 100\%\}$, and we discuss this choice in \cref{app:param}.
% In \cref{appendix:exp:palu} we compare \algname{} with PALU, a SOTA solution with dimension reduction strategy.

Beyond the general comparisons above, we further evaluate \algname{} against Palu, the SOTA solution that focuses on KV cache dimension reduction. 
The detailed results are presented in \cref{appendix:exp:palu}.

%% file: Exp/main-results.tex
\subsection{Main Results on Long-Context Benchmarks}

We evaluate our solution on long-context benchmarks, including LongBench and RULER.

\begin{table*}[t]
  \caption{Performance comparison on LongBench. \basicalg{} (MD) is compared against KV cache compression methods that do not rely on additional head-level importance profiling.}
  \label{tab:longbench}
  \setlength{\tabcolsep}{1.5pt}
  \newcommand\rot[1]{\rotatebox[origin=c]{45}{#1}}
  \centering
    \footnotesize
        \begin{tabular}{lccccccccccccccccc}
          \toprule
          \multirow{2}{*}[-2.5ex]{Method} & \multicolumn{3}{c}{Single-Doc QA} & \multicolumn{3}{c}{Multi-Doc QA} & \multicolumn{3}{c}{Summarization} & \multicolumn{3}{c}{Few-shot Learning} & \multicolumn{2}{c}{Synthetic} & \multicolumn{2}{c}{Code} & \multirow{2}{*}[-2.5ex]{Avg.} \\
            \cmidrule(lr){2-4} \cmidrule(lr){5-7} \cmidrule(lr){8-10} \cmidrule(lr){11-13} \cmidrule(lr){14-15} \cmidrule(lr){16-17}
            % \cline{2-14}
          & \rot{NQA} & \rot{Qasper} & \rot{MF-en} & \rot{HQA} & \rot{2WQA} & \rot{Musq} & \rot{GRep} & \rot{QMSum} & \rot{MNews} & \rot{TREC} & \rot{TQA} & \rot{SAMSum} & \rot{PCount} & \rot{PRe} & \rot{Lcc} & \rot{RB-P} & \\
          
          \midrule
            \multicolumn{18}{c}{\small \rule{0pt}{8pt}\textit{Llama-3-8B-Instruct KV size = 128}} \\ 
        \midrule

        Full & 25.56 & 32.30 & 39.71 & 43.56 & 35.49 & 21.14 & 28.58 & 23.22 & 26.67 & 74.00 & 90.48 & 42.29 & 4.80 & 69.75 & 59.13 & 54.02 & 41.92 \\
        H2O & 23.52 & 13.74 & 25.26 & 29.07 & 22.72 & 14.67 & 21.94 & 21.34 & 24.01 & 46.50 & 87.99 & 35.12 & 5.36 & 68.21 & 54.77 & 48.73 & 33.93 \\
        SnapKV & 22.39 & 15.98 & 30.97 & 40.78 & 28.60 & 19.35 & 19.76 & 21.88 & 21.40 & 65.50 & 89.72 & 38.77 & 5.75 & 69.00 & 58.76 & 54.73 & 37.71 \\
        Pyramid & 21.40 & 16.92 & 31.62 & 38.45 & 28.72 & 18.59 & 19.97 & 22.48 & 20.96 & 66.50 & 89.35 & 38.39 & \textbf{5.92} & 69.00 & 57.86 & 51.80 & 37.37 \\
        % HeadKV & 22.56 & 28.04 & 39.64 & \textbf{44.40} & 31.83 & \textbf{20.62} & 22.16 & 23.04 & 24.04 & 71.00 & 90.10 & 38.81 & 4.89 & 68.75 & 61.72 & \textbf{61.77} & 40.84 \\
        MD & \textbf{25.22} & \textbf{29.07} & \textbf{38.30} & \textbf{43.20} & \textbf{33.84} & \textbf{19.58} & \textbf{24.53} & \textbf{23.12} & \textbf{26.03} & \textbf{72.00} & \textbf{90.61} & \textbf{42.16} & 5.54 & \textbf{69.50} & \textbf{59.32} & \textbf{55.16} & \textbf{41.07} \\
        % MD-H & 24.59 & 27.15 & \textbf{41.39} & 43.59 & \textbf{35.10} & 20.59 & 23.43 & 22.74 & 25.56 & 70.00 & 90.34 & 40.51 & 5.08 & \textbf{69.50} & \textbf{62.30} & 59.69 & \textbf{41.35} \\

        \midrule

        \multicolumn{18}{c}{\small \rule{0pt}{8pt}\textit{Llama-3-8B-Instruct KV size = 512}} \\ 
        \midrule

        Full & 25.56 & 32.30 & 39.71 & 43.56 & 35.49 & 21.14 & 28.58 & 23.22 & 26.67 & 74.00 & 90.48 & 42.29 & 4.80 & 69.75 & 59.13 & 54.02 & 41.92 \\
        H2O & 23.76 & 21.56 & 31.57 & 40.54 & 30.36 & 17.79 & 24.89 & 22.15 & 25.74 & 69.00 & 90.67 & 40.33 & \textbf{6.05} & 68.07 & 61.00 & 56.26 & 39.36 \\
        SnapKV & 25.53 & 23.59 & 38.58 & \textbf{43.78} & 33.33 & 19.85 & 23.10 & 22.52 & 24.19 & 71.00 & 90.57 & 40.48 & 5.51 & \textbf{69.50} & \textbf{61.10} & \textbf{57.23} & 40.62 \\
        Pyramid & 24.83 & 23.33 & 35.20 & 43.29 & 31.87 & 20.55 & 23.43 & 22.80 & 24.29 & 71.50 & \textbf{90.61} & 40.81 & 5.91 & \textbf{69.50} & 59.60 & 54.71 & 40.14 \\
        % HeadKV & 24.83 & 29.85 & 38.06 & \textbf{44.30} & 36.34 & \textbf{22.17} & 24.70 & 23.14 & 26.03 & \textbf{73.50} & 90.56 & 40.82 & 5.66 & \textbf{69.50} & \textbf{62.36} & \textbf{60.66} & 42.03 \\
        MD & \textbf{25.56} & \textbf{32.11} & \textbf{39.71} & 43.56 & \textbf{35.55} & \textbf{21.18} & \textbf{28.66} & \textbf{23.29} & \textbf{26.70} & \textbf{73.50} & 90.48 & \textbf{42.47} & 4.80 & 69.25 & 59.15 & 54.13 & \textbf{41.88} \\
        % MD-H & 25.37 & 31.96 & \textbf{39.82} & 43.97 & \textbf{36.91} & 21.39 & 27.61 & \textbf{23.35} & 26.24 & \textbf{73.50} & \textbf{90.64} & 41.83 & 5.27 & \textbf{69.50} & 62.02 & 58.29 & \textbf{42.35} \\

        \midrule

        \multicolumn{18}{c}{\small \rule{0pt}{8pt}\textit{Mistral-7B-Instruct KV size = 128}} \\ 

        \midrule

        Full & 29.07 & 41.54 & 52.88 & 49.37 & 39.01 & 28.58 & 35.07 & 25.71 & 27.73 & 76.00 &	88.59 & 47.51 & 6.00 & 98.50 & 61.48 & 62.68 & 48.11 \\
        H2O & 26.48 & 29.11 & 37.10 & 45.26 & 32.37 & 22.33 & 25.20 & 20.91 & 25.02 & 68.50 & 86.04 & 38.79 & 5.50 & 81.50 & 47.28 & 45.06 & 39.78 \\
        SnapKV & 26.78 & 30.63 & 48.03 & 47.58 & 35.09 & 25.35 & 21.90 & 22.12 & 21.83 & \textbf{69.50} & 88.59 & 43.91 & \textbf{6.00} & 94.00 & 55.70 & 55.14 & 43.26 \\
        Pyramid & 26.65 & 29.71 & 48.84 & 48.01 & 35.85 & 25.06 & 22.31 & 22.36 & 21.17 & 68.00 & 88.80 & 43.87 & 4.50 & 94.00 & 55.39 & 52.57 & 42.94 \\
        % HeadKV & 27.23 & 37.88 & \textbf{52.46} & 47.74 & \textbf{38.95} & \textbf{27.69} & 26.15 & 24.12 & 24.32 & \textbf{75.00} & \textbf{90.11} & 44.35 & 4.50 & 94.5 & 58.93 & 57.95 & 45.74 \\
        MD & \textbf{29.88} & \textbf{38.09} & \textbf{51.57} & \textbf{48.91} & \textbf{38.40} & \textbf{26.95} & \textbf{27.12} & \textbf{25.08} & \textbf{27.03} & \textbf{69.50} & \textbf{88.81} & \textbf{47.44} & 5.50 & \textbf{98.00} & 	\textbf{61.25} & \textbf{61.45} & \textbf{46.56} \\
        % MD-H & 28.22 & 38.08 & 51.12 & 48.79 & 37.75 & 27.32 & \textbf{27.38} & 24.73 & 26.63 & 66.50 & \textbf{90.11} & 44.95 & 5.23 & \textbf{98.00} & 61.14 & 60.38 & 46.02 \\

        \midrule
        \multicolumn{18}{c}{\small \rule{0pt}{8pt}\textit{Mistral-7B-Instruct KV size = 512}} \\ 

        \midrule

        Full & 29.07 & 41.54 & 52.88 & 49.37 & 39.01 & 28.58 & 35.07 & 25.71 & 27.73 & 76.00 &	88.59 & 47.51 & 6.00 & 98.50 & 61.48 & 62.68 & 48.11 \\
        H2O & 26.95 & 35.27 & 43.65 & 47.13 & 35.25 & 23.68 & 28.70 & 22.72 & 26.51 & 74.50 & 88.76 & 42.92 & \textbf{5.50} & 91.00 & 57.08 & 52.08 & 43.86 \\
        SnapKV & 29.22 & 36.49 & \textbf{54.05} & \textbf{49.70} & \textbf{38.72} & 26.72 & 26.01 & 24.24 & 25.26 & \textbf{75.00} & 89.44 & 46.85 & 5.00 & 96.00 & 60.27 & 60.66 & 46.48 \\
        Pyramid & 27.86 & 36.07 & 52.99 & 49.35 & 38.10 & 26.97 & 25.67 & 23.91 & 24.92 & 73.50 & \textbf{89.91} & 45.83 & 5.00 & 96.50 & 59.37 & 59.20 & 45.95 \\
        % HeadKV & 29.27 & 40.62 & 53.30 & 49.89 & \textbf{39.23} & \textbf{29.15} & 30.24 & 24.62 & 27.00 & \textbf{76.50} & \textbf{89.58} & 45.26 & 4.50 & 97.00 & 61.22 & 62.15 & 47.47 \\
        MD & \textbf{30.06} & \textbf{38.72} & 52.04 & 48.30 & 38.64 & \textbf{27.20} & \textbf{27.86} & \textbf{25.26} & \textbf{27.20} & 72.00 & 88.81 & \textbf{47.39} & \textbf{5.50} & \textbf{97.50} & \textbf{61.52} & \textbf{61.64} & \textbf{46.85} \\
        % MD-H & 29.91 & \textbf{40.87} & 53.17 & \textbf{50.09} & 39.14 & 28.32 & \textbf{32.43} & \textbf{25.77} & \textbf{27.50} & 75.50 & 89.06 & \textbf{47.60} & \textbf{6.00} & \textbf{99.00} & \textbf{61.67} & \textbf{62.60} & \textbf{48.04} \\
            
        \bottomrule
        \end{tabular}

  % \vskip -0.1in
\end{table*}

\begin{figure}[t]
  \centering
    \includegraphics[width=\columnwidth]{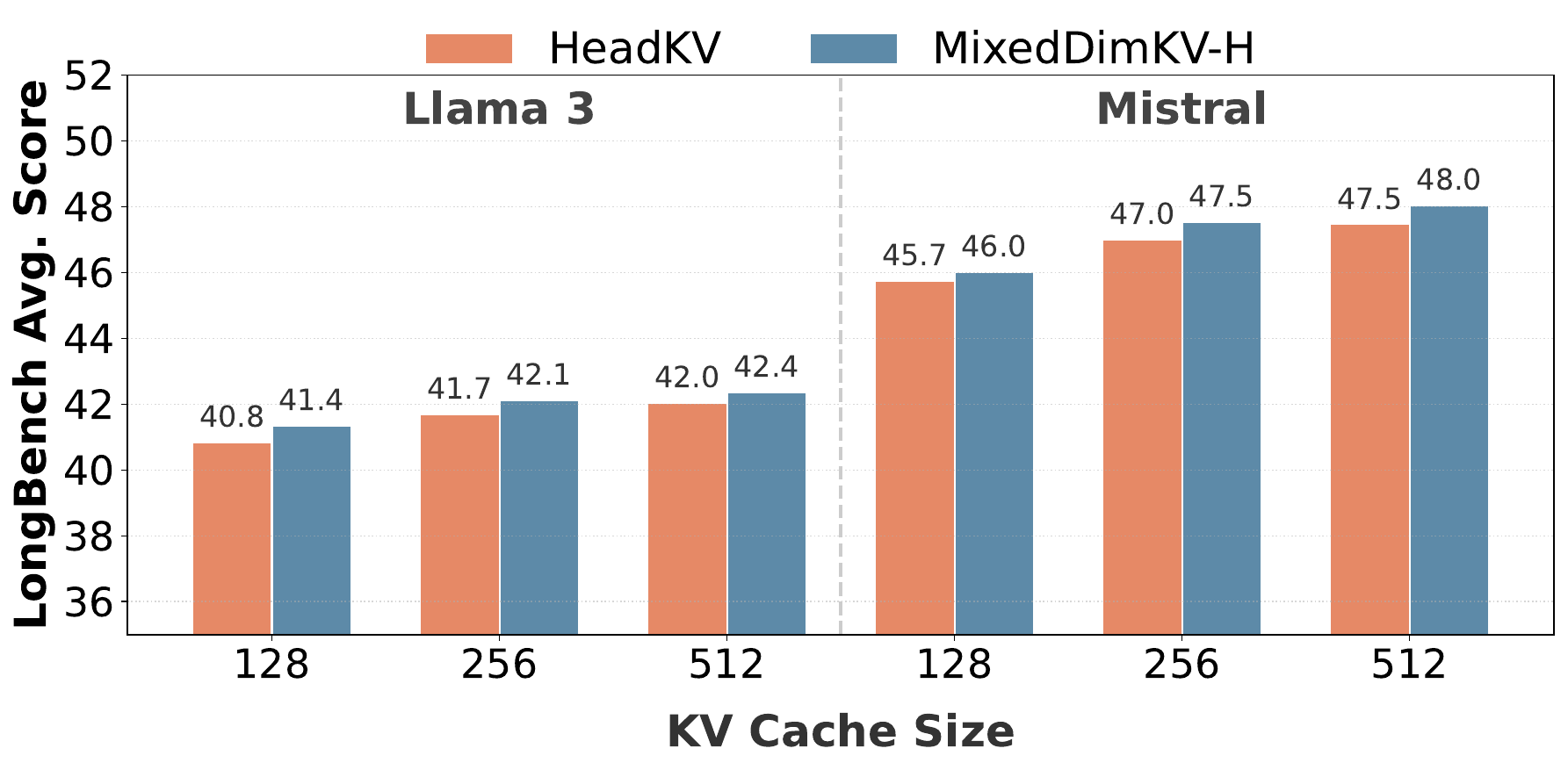}
    \caption{
      Comparison of HeadKV and \optalg{}.
    }
    \label{fig:exp:longbench:hkv-mdh}
\end{figure}

% \begin{table*}[t]
%   \caption{Performance of our solution and baselines on RULER.}
%   \label{tab:ruler}
%   % \setlength{\tabcolsep}{3pt}
%   \begin{center}
%     \begin{small}
%         \begin{tabular}{lcccccccccc}
%           \toprule
%           Method & S1 & S2 & MK1 & MK2 & MQ & MV & VT & QA-1 & QA-2 & Avg. \\

%           \midrule
%             \multicolumn{11}{c}{\small \rule{0pt}{8pt}\textit{Llama-3-8B-1048K}} \\ 
%         \midrule

%         Full & 100 & 100 & 99.4 & 99.40 & 99.25 & 83.40  & 96.64 & 69.83 & 58.40 & 89.59 \\
%         H2O & 1.6 & 0 & 0 & 0.8 & 0 & 0.05 & 8.24 & 23.23 & 23.4 & 6.37\\
%         SnapKV & 100 & 91.60 & 85.00 & 81.00 & 36.05 & 45.15 & 37.64 & 58.77 & 46.20 & 64.60\\
%         PyramidKV & 100 & 91.60 & 87.00 & 83.60 & 35.70 & 41.95 & 40.48 & 55.80 & 44.40 & 64.50\\
%         HeadKV & 100 & 99.60 &	99.20 & 96.00 & 95.65 & 86.05 & 83.88 & 64.20 & 53.20 & 86.42 \\
%         MD & 99.80 & 99.80 & 97.60 & 89.20 & 54.50 & 50.35 & 93.80 & 65.60 & 53.40 & 78.23\\
%         MD-H & 99.60 & 99.60 & 98.80 & 96.00 & 87.60 & 83.50 & 94.92 & 67.28 & 55.60 & 86.99\\

%         \midrule

%         % \multicolumn{11}{c}{\small \rule{0pt}{8pt}\textit{Mistral-7B-Instruct}} \\ 

%         % \midrule

%         % Full & \\
%         % H2O & \\
%         % SnapKV & \\
%         % PyramidKV & \\
%         % Palu & \\
%         % HeadKV & \\
%         % MD-SnapKV & \\
%         % MD-HeadKV & \\
            
%           \bottomrule
%         \end{tabular}
%     \end{small}
%   \end{center}
%   % \vskip -0.1in
% \end{table*}

\begin{figure}[t]
  \centering
    \begin{subfigure}{0.48\linewidth}
        \centering
        \includegraphics[width=\linewidth]{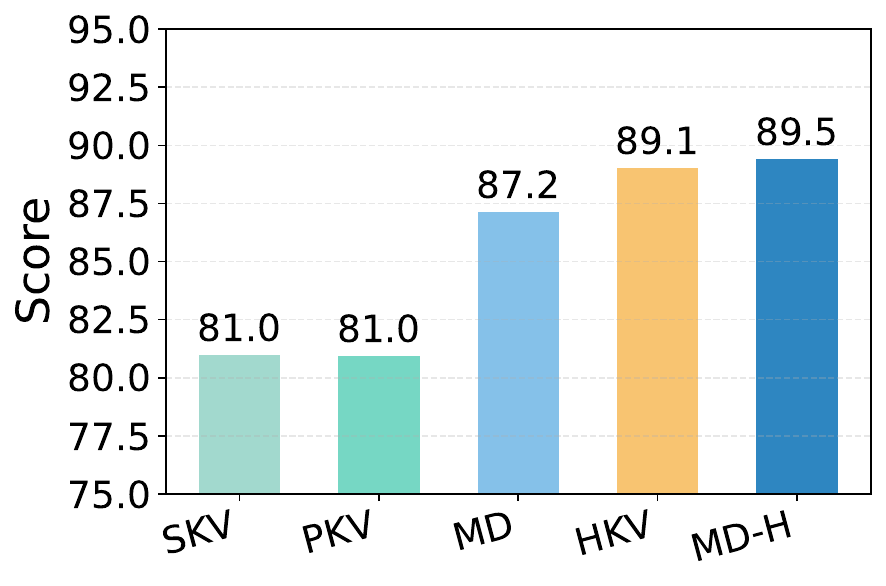}
        \caption{8K Context Length}
        \label{fig:exp:ruler:8k}
    \end{subfigure} \hfill
    \begin{subfigure}{0.48\linewidth}
        \centering
        \includegraphics[width=\linewidth]{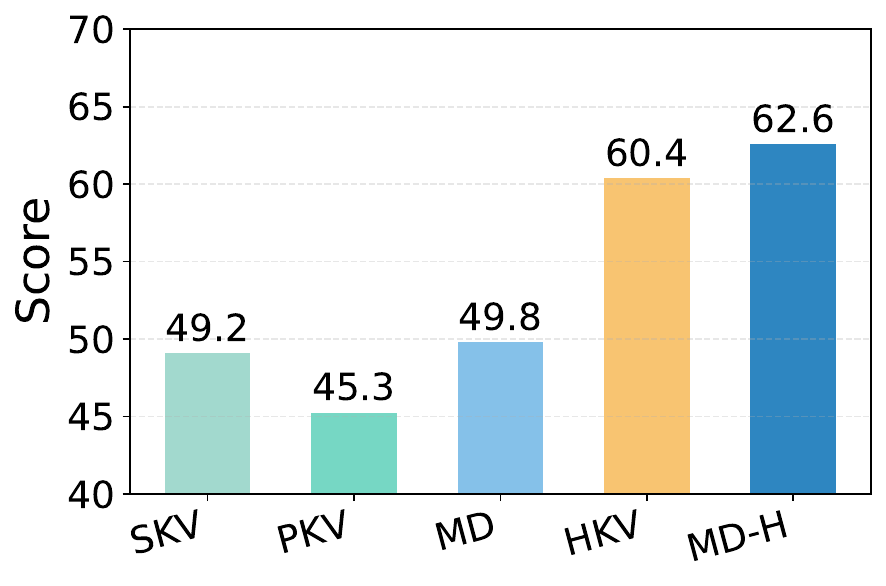}
        \caption{32K Context Length}
        \label{fig:exp:ruler:32k}
    \end{subfigure}
    \caption{
      Performance comparison on RULER. The terms `SKV', `PKV', `MD', `HKV', `MD-H' denote SnapKV, PyramidKV, \basicalg{}, HeadKV, \optalg{}, respectively.
    }
    \label{fig:exp:ruler}
\end{figure}

We evaluate performance on LongBench using equivalent KV sizes of 128, 256, and 512.
Following \cite{pyramidkv, headkv}, a KV size $T$ denotes a total budget of $H \times T \times D$, where $H$ is the number of heads and $D$ is the head dimension.
Since our approach introduces dimension reduction and projection overhead, this metric represents the equivalent memory budget rather than raw token counts. 
Due to space constraints, we present primary results here and provide the full results in \cref{appendix:exp:longbench}.

As shown in \cref{tab:longbench}, \basicalg{} significantly outperforms baselines lacking head-level importance profiling (H2O, SnapKV, PyramidKV), especially under tight budgets (KV size = 128).
At a KV size of 512, \basicalg{} matches full-attention performance on Llama-3. 
\cref{fig:exp:longbench:hkv-mdh} further shows that \optalg{} consistently exceeds HeadKV using identical head-importance profiling data, confirming the efficacy of mixed-dimension compression.
Notably, even without profiling, \basicalg{} outperforms HeadKV at 128 KV size on both Llama-3 (41.07 vs. 40.84) and Mistral (46.56 vs. 45.74).
We attribute this to HeadKV’s reliance on static, offline metrics derived from extensive Needle-in-a-Haystack tests. While these metrics capture general head importance, they fail to identify which heads are more critical within the current KV cache context. 
In contrast, our method dynamically adapts to the current input, enabling more precise head-level allocation when resources are limited.
% A more promising strategy would be to combine the strengths of both approaches, which we leave as future work.

\cref{fig:exp:ruler} presents the comparison of the average score on RULER with context lengths of 8K and 32K, using a fixed KV cache size of 256.
We omit H2O from the visualization due to its significantly lower scores, and the full results can be found in \cref{appendix:exp:ruler}.
The results on RULER exhibit a similar trend. \basicalg{} consistently surpasses SnapKV and PyramidKV in overall performance.
Notably, in the more challenging 32K setting, \optalg{} not only achieves the highest score but also broadens its lead over the competitive HeadKV baseline, reaching a peak performance of $62.64$.
This indicates our mixed-dimension compression enables more efficient memory usage when the budget is extremely limited.

%% file: Exp/niah.tex
\subsection{Needle-in-a-Haystack Test}

We evaluate the long-context information retrieval performance of different KV cache compression schemes using the Needle-in-a-Haystack test.
For Llama-3-8B-Instruct-1048k, we set the maximum context length to 50K, the KV size to $128$, and the query window size to $32$.
As shown in \cref{fig:exp:niah:llama}, our solution successfully retrieves important information under long-context scenarios.
\basicalg{} significantly outperforms the baselines that do not rely on additional head-level information (SnapKV, PyramidKV), and achieves a score close to HeadKV, which requires a time-consuming profiling process.
Equipped with the same head-level information, \optalg{} achieves a 100 score and outperforms HeadKV, indicating our mixed-dimension compression scheme is beneficial to the space utility of KV cache.
In \cref{appendix:exp:niah-mistral} we present the Needle-in-a-Haystack results on Mistral-7B-Instruct, which show the same trend as on Llama-3-8B-Instruct-1048K.
% For Mistral-7B-Instruct, we set the maximum context length to $32K$, which is the maximum supported context length of the model.

\begin{figure*}[t]
    \centering

    \begin{subfigure}{0.50\linewidth}
        \centering
        \includegraphics[width=\linewidth]{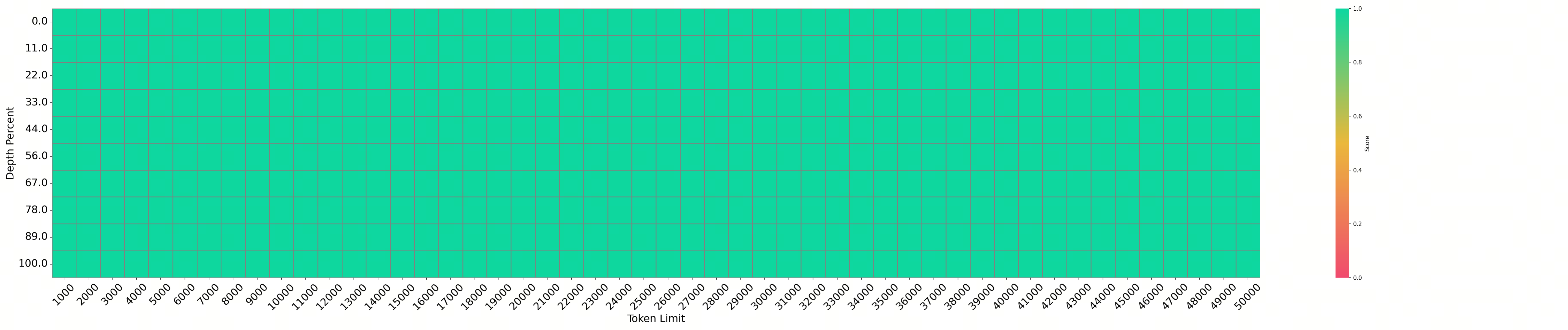}
        \caption{FullKV Score: 100}
    \end{subfigure}%\hfill
    \begin{subfigure}{0.50\linewidth}
        \centering
        \includegraphics[width=\linewidth]{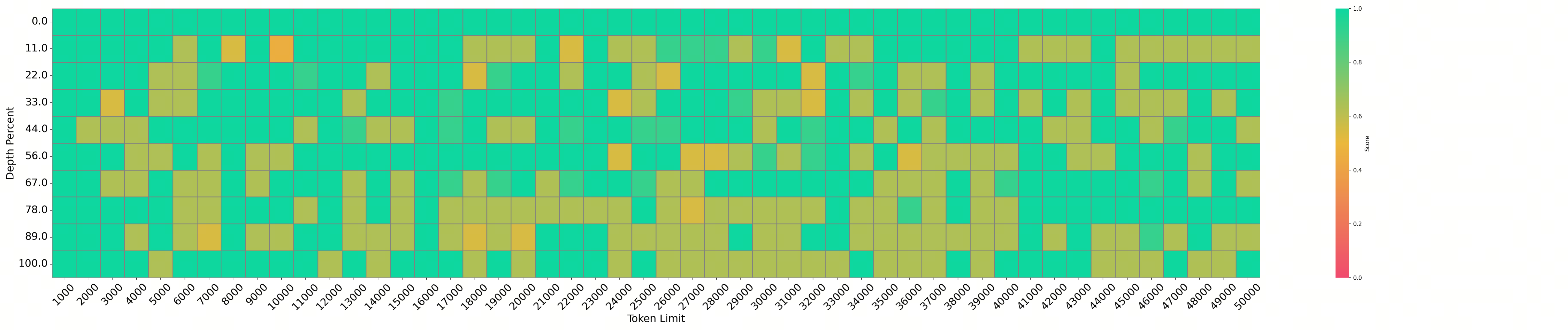}
        \caption{SnapKV Score: 85.80}
    \end{subfigure}

    % \vspace{0.5em}

    % Row 2
    \begin{subfigure}{0.50\linewidth}
        \centering
        \includegraphics[width=\linewidth]{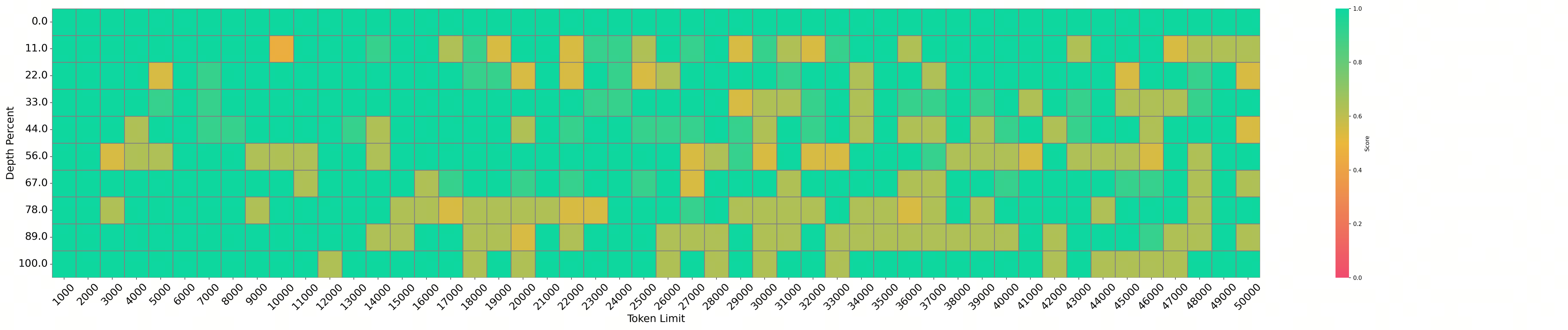}
        \caption{PyramidKV Score: 89.40}
    \end{subfigure}%\hfill
    \begin{subfigure}{0.50\linewidth}
        \centering
        \includegraphics[width=\linewidth]{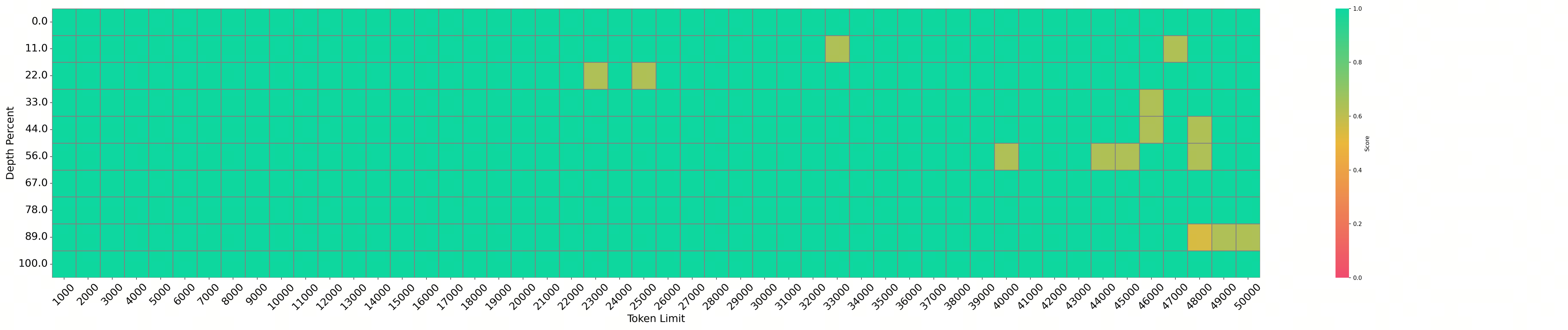}
        \caption{HeadKV Score: 99.00}
    \end{subfigure}

    % \vspace{0.5em}

    % Row 3
    \begin{subfigure}{0.50\linewidth}
        \centering
        \includegraphics[width=\linewidth]{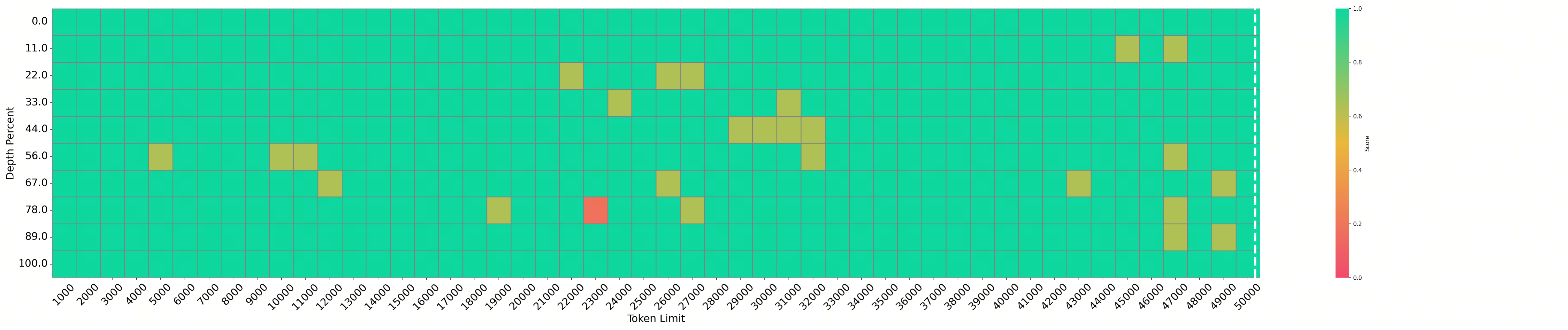}
        \caption{\basicalg{} Score: 98.00}
    \end{subfigure}%\hfill
    \begin{subfigure}{0.50\linewidth}
        \centering
        \includegraphics[width=\linewidth]{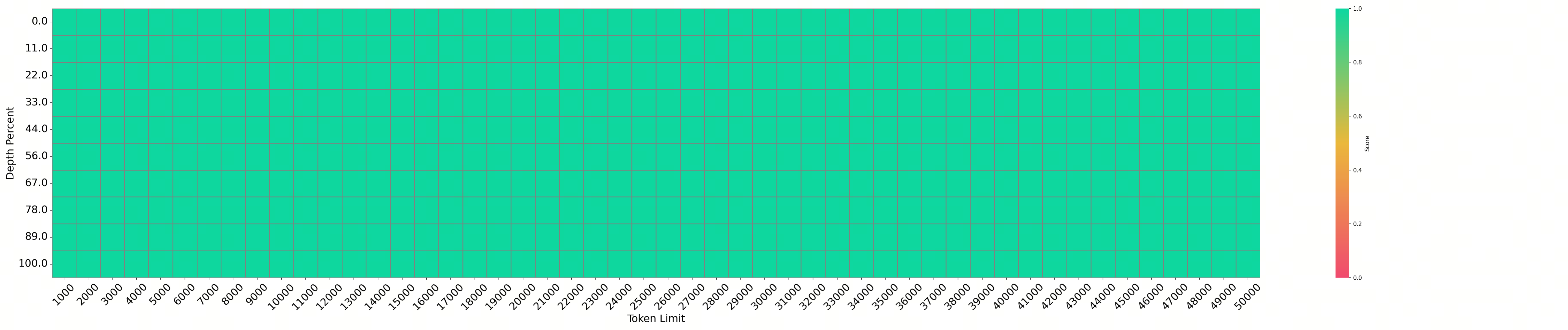}
        \caption{\optalg{} Score: 100.00}
    \end{subfigure}

    \caption{Results of Needle-in-a-Haystack test on Llama-3-8B-Instruct-1048K with KV size $128$.}
    \label{fig:exp:niah:llama}
\end{figure*}

%% file: Exp/efficiency.tex
\subsection{Inference Efficiency}

\begin{figure}[ht]
  \centering
    \includegraphics[width=0.45\linewidth]{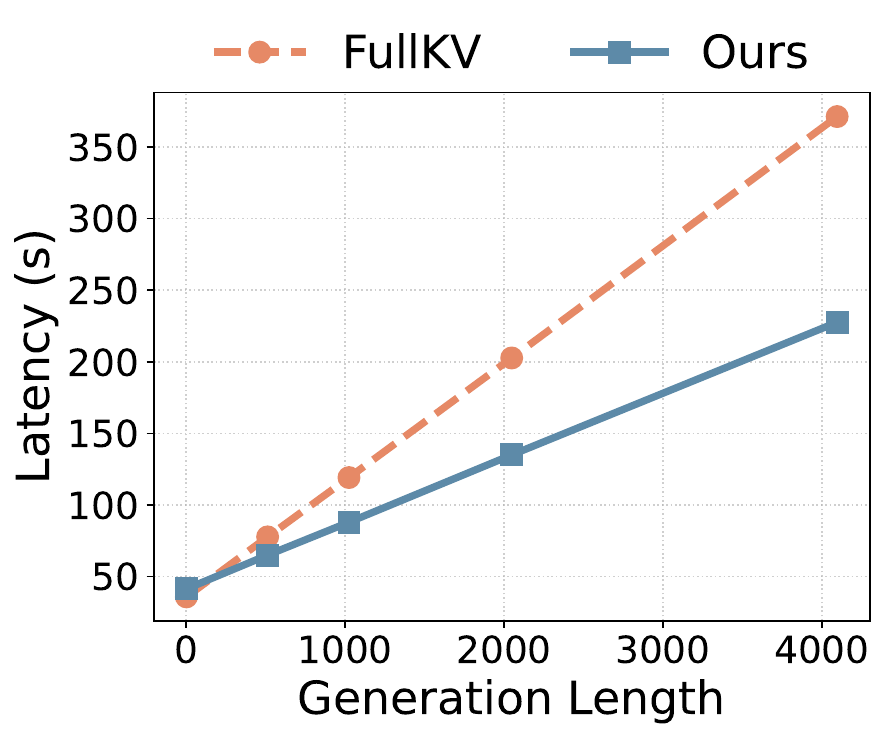}
    \includegraphics[width=0.45\linewidth]{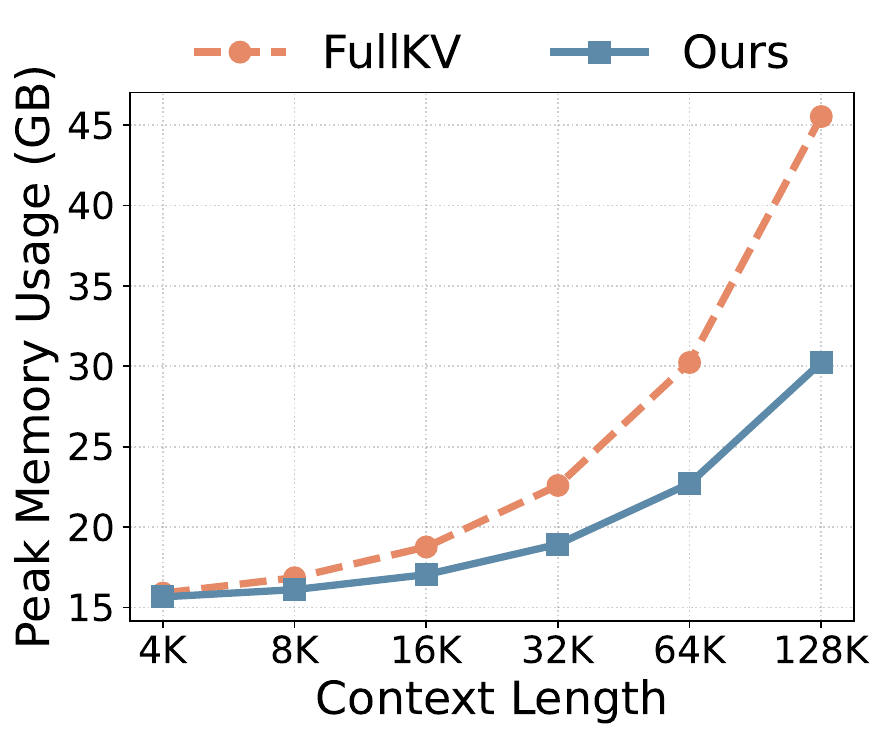}
    \caption{
      Decoding latency and peak memory usage.
    }
    \label{fig:exp:infer}
\end{figure}

In this section, we implement our solution in FlashAttention \cite{flashattention-2} and evaluate the inference efficiency on Llama-3-8B-Instruct-1048K.
% Experiments are conducted on Llama-3-8B-Instruct-1048K, and we construct a $128K$-length input from the Needle-in-a-Haystack dataset. 
% \cref{fig:exp:infer} shows the results of the decoding latency across different generation lengths and peak memory usage across different context lengths.
%The evaluation results of the decoding latency under different generation lengths and peak memory usage under different context lengths are shown in \cref{fig:exp:infer}.
\Cref{fig:exp:infer} reports decoding latency for $128K$ inputs across varying generation lengths, and peak memory usage across different context lengths with generation length set to $1$. 
We construct these inputs from the Needle-in-a-Haystack dataset.
When the generation length is $1$, the decoding latency represents the pre-filling latency, and our solution introduces minimal overhead, with latency only slightly higher than full attention (around $5$s for $128$K input).
For a generation length of $4096$, our solution reduces total and per-token latency to $61.31\%$ and $55.18\%$ of the baselines, respectively.
With $128K$ contexts, our solution lowers the peak memory usage to $66.44\%$.

%% file: Exp/ablation.tex
\subsection{Ablation Study}
\label{sec:exp:ablation}

In this section, we further conduct extensive ablation studies.
By comparing head-wise and joint-head compression schemes, we show that joint-head compression does not outperform head-wise compression.
We also evaluate our solution under different numbers of candidate dimensions, indicating that a more fine-grained compression leads to better performance.
In \cref{appendix:exp:dim-alloc} we report the number of tokens compressed to each dimension to guarantee it does not degenerate to the token eviction scheme.
The following experiments are conducted on the LongBench with Llama-3-8B-Instruct.

\begin{figure}[ht]
  \centering
    \includegraphics[width=\columnwidth]{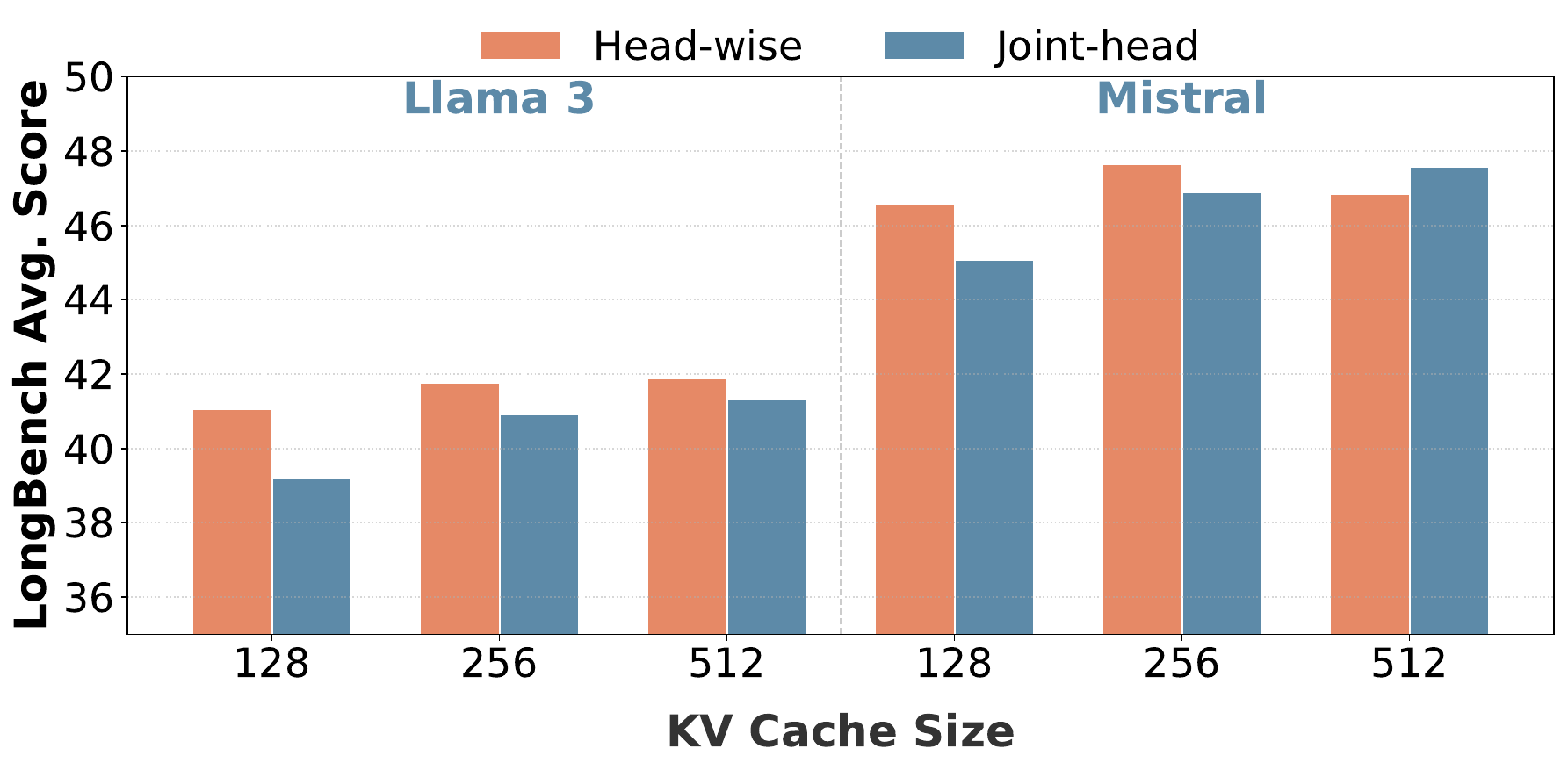}
    \caption{
      Comparison of head-wise and joint-head compression.
    }
    \label{fig:ablation:head}
\end{figure}

\textbf{Head-wise vs. joint-head compression.}
We conducted an ablation study by replacing head-wise compression with joint-head compression on keys to evaluate its impact.
Because heads are combined for compression, the latent dimension allocation process only searches for an optimal token-level budget distribution.
Both strategies share the same set of candidate dimension compression ratios and use no extra information of head-level importance.
As shown in \cref{fig:ablation:head}, head-wise compression achieves much better performance under a low budget (e.g., $128$ KV size), due to its reduced projection matrices overhead.
Even with a higher budget size, the joint-head compression does not show an advantage.
We believe this is because head-wise compression can distribute budget more flexibly as compression is conducted at the head-level.

% \begin{figure}[t]
%     \centering

%     \begin{subfigure}{0.48\linewidth}
%         \centering
%         \includegraphics[width=\linewidth]{Figure/Exp/granularity.pdf}
%         \caption{Compression granularity}
%         \label{fig:ablation:granularity}
%     \end{subfigure} \hfill
%     \begin{subfigure}{0.48\linewidth}
%         \centering
%         \includegraphics[width=\linewidth]{Figure/Exp/dim_num.pdf}
%         \caption{Dimension allocation ratio}
%         \label{fig:ablation:dim-num}
%     \end{subfigure}

%     \caption{\textbf{Left}: Config A, B refer to  different granularity of compression ratios, $\{0\%, 100\%\}$ and $\{0\%, 12.5\%, 25\%, 100\%\}$. \textbf{Right}: Average number of tokens and the allocation statistics across candidate compression ratios on five datasets of LongBench.}
%     \label{fig:ablation:granularity-dim-num}
% \end{figure}

\begin{table}
    \caption{Comparison of SnapKV and \basicalg{} with two granularity of candidate ratio sets: $\{0\%, 100\%\}$, and $\{0\%, 12.5\%, 25\%, 100\%\}$. }
  \label{tab:ablation:granularity}
  \setlength{\tabcolsep}{3pt}
  \begin{center}
      \begin{small}
        \begin{tabular}{lccc}
          \toprule
            Method & SnapKV & 
            $\{0\%, 100\%\}$ & 
            $\{0\%, 12.5\%, 25\%, 100\%\}$ \\
          \midrule

        Score & 39.77 & 40.93 & 41.76 \\
          
          \bottomrule
        \end{tabular}
      \end{small}
  \end{center}
\end{table}

\textbf{Granularity of mixed-dimension ratios.}
We investigate the influence of dimension granularity by evaluating two candidate ratio sets: $\{0\%, 100\%\}$ and $\{0\%, 12.5\%, 25\%, 100\%\}$, which represent mixed-dimension compression at varying levels of granularity.
We set the KV size to 256, and use SnapKV as the baseline. 
As shown in \cref{tab:ablation:granularity}, $\{0\%, 100\%\}$ outperforms the SnapKV baseline, and we attribute the superior performance to two advantages.
First, our solution adopts a more rational score design that incorporates value information, while SnapKV only considers the impact of keys.
Second, our dimension allocation mechanism inherently supports distinct budget ratios for different attention heads, in contrast to SnapKV, which enforces identical budget consumption across all heads.
Moreover, the performance leap from $\{0\%, 100\%\}$ to $\{0\%, 12.5\%, 25\%, 100\%\}$ indicates that progressively finer-grained dimension ratio design leads to better results.

% \paragraph{Dimension allocation statistics.}

% For most experiments, the candidate set of dimension compression ratios is set to $\{0\%, 12.5\%, 25\%, 100\%\}$.
% We report the average token length and the proportion of tokens allocated to each compression ratio, averaged over all layers and samples in each dataset.
% \cref{fig:ablation:dim-num} presents the results on five datasets of LongBench with KV size $256$, and more results can be found in \cref{appendix:exp:dim-alloc}.
% % and the results on other datasets can be found in \todo{}.
% We find that all four compression ratios take effect by allocating a portion of tokens.
% Specifically, tokens are predominantly allocated to the $0\%$ and $100\%$ compression ratios, while the two intermediate ratios account for a non-negligible 
% $10\%$-$15\%$ of the total tokens, which improves the space utilization of KV cache.
% Notably, in longer datasets, an even larger proportion of tokens are assigned to the $0\%$ ratio, which is a direct consequence of the constrained budget setting.

%% file: Body/7.conclusion.tex
\section{Conclusion}
In this paper, we present \algname{}, an efficient KV cache compression scheme.
\algname{} maximizes the budget utilization by compressing tokens to suitable dimensions based on their impact on the attention output.
% According to whether additional head-level importance information is utilized, \todo{} yields two variants:
When information of head-level importance is available, we further propose \optalg{}, which combines the benefits of mixed-dimension compression and head-level budget allocation.
Experiments on long-context benchmarks show that \basicalg{} significantly outperforms prior methods without head-level profiling, while \optalg{} consistently exceeds HeadKV.
On an A100, \algname{} reduces the per-token decoding latency to $55\%$ compared with full attention.
These results demonstrate that \algname{} significantly alleviates the memory bottleneck of LLMs, thereby extending the scalability of LLMs to long-context tasks within limited hardware budgets.

%% file: Body/appendix.tex
\input{Appendix/dual-gap}

\section{Parameter Choice}
\label{app:param}

\input{Appendix/parameter}

\section{Extended Experimental Results}

In this section, we provide additional experimental results to complement the analysis in the main text.
In \cref{appendix:exp:longbench} and \cref{appendix:exp:ruler}, we present additional results on LongBench and RULER, respectively.
In \cref{appendix:exp:niah-mistral} we show the results of Needle-in-a-Haystack tests on the Mistral model.

\input{Appendix/longbench}

\input{Appendix/ruler}
\input{Appendix/palu}
\input{Appendix/niah}

\input{Appendix/allocation}

%% file: Appendix/dual-gap.tex
\section{Proof of Theorem \ref{theory:dual-gap} and Gap Analysis}
\label{app:dual-gap}

In this section, we provide the formal proof for Theorem \ref{theory:dual-gap} and analyze the duality gap. We first prove a sequence-length-independent upper bound, then show that the relative gap vanishes as the context length grows, and finally explain why the realized gap at the converged dual variable is typically much tighter than the worst-case bound.

\subsection{Proof of Theoretical Bound}

Recall the primal problem $P^*$ in Eq. \eqref{eq:allocation_primal} and its dual $D^*$ in Eq. \eqref{eq:allocation}.

\begin{proof}[\textbf{Proof of Theorem \ref{theory:dual-gap}}]
    The problem consists of $N$ separable terms subject to $m=1$ linear constraint. The \textbf{Shapley-Folkman-Starr Theorem}~\cite{starr1969quasi} implies that the duality gap of a separable problem with $m$ linear constraints can be bounded by the sum of non-convexities of at most $m$ terms.
    
    Since we have exactly one constraint ($m=1$), the gap is bounded by the non-convexity of a single term:
    \begin{equation}
        P^* - D^* \le \max_{i=1,\dots,N} \rho(L_i) \eqqcolon \Delta.
    \end{equation}
    Here, $\rho(L_i)$ denotes the maximum deviation between $L_i$ and its convex envelope on the convexified domain $\mathrm{conv}(\mathcal D)$. Crucially, this bound $\Delta$ depends only on the properties of a single token's loss function. Therefore, $\Delta$ is a \textbf{constant} that remains \textbf{independent of the sequence length $N$}.
\end{proof}

\subsection{Vanishing Relative Error}

This analysis suggests that the relative impact of the duality gap diminishes as the context length increases.
Since the primal objective $P^*(N)$ is the summation of losses over $N$ tokens, the total value naturally grows with the sequence length (i.e., $P^* \approx O(N)$).

When comparing the bounded absolute gap against this growing total loss, the relative error tends to become negligible:
\begin{equation}
    \frac{\Gamma}{P^*(N)} \approx O(\frac{1}{N}) \to 0 \quad \text{as } N \to \infty.
\end{equation}
This indicates that for long-context scenarios, the solution provided by the Lagrangian relaxation is asymptotically near-optimal.

\subsection{Gap Tightness at Converged $\lambda^*$}

While the theorem guarantees that the gap does not scale with $N$, the theoretical constant $\Delta$ is a worst-case upper bound. Theoretically, the non-convexity $\rho$ could be as large as the total loss of a significant token ("loss jump"), which might suggest a potentially loose bound.

However, in practice, the duality gap is determined by the specific dual variable $\lambda^*$ to which our algorithm converges. Let $\hat d$ be the final \emph{feasible} allocation returned by our algorithm and let $D(\lambda)$ be the dual objective. By weak duality, for any $\lambda\ge0$ we have $D(\lambda)\le D^*\le P^*\le P(\hat d)$, hence
\begin{equation}
\Gamma = P^* - D^* \le P(\hat d)-D(\lambda^*) \eqqcolon \overline{\Gamma}(\lambda^*).
\end{equation}
In Section~\ref{app:dual-gap:empirical}, we verify that the realized bound $\overline{\Gamma}(\lambda^*)$ is typically much smaller than the worst-case $\Delta$.

\subsection{Empirical Verification}
\label{app:dual-gap:empirical}

We validate this analysis on the \textbf{MultiFieldQA-en} dataset from the \textbf{LongBench} benchmark. We sample sequences of varying lengths and evaluate the duality gap using \textbf{Llama-3-8B-Instruct}.

Table \ref{tab:gap_scaling} presents the results:
\begin{itemize}
    \item \textbf{Primal Loss ($P(\hat d)$):} As expected, the total objective value grows with context length (from 63.83 to 104.59).
    \item \textbf{Gap Upper Bound ($\overline{\Gamma}(\lambda^*)$):} The upper bound of duality gap $P(\hat d)-D(\lambda^*)$ remains small and shows no systematic growth with $N$.
\end{itemize}
Consequently, the relative gap drops to below $0.15\%$ for longer sequences, supporting the effectiveness of our approach for long-context tasks.

In addition to the sequence-level averages in Table~\ref{tab:gap_scaling}, Table~\ref{tab:layer_stats} shows that this tightness holds \emph{across layers} as well, indicating it remains small for the vast majority of layers.

\begin{table}[h]
    \centering
    \caption{Scalability of Duality Gap (MultiFieldQA-en from LongBench).}
    \label{tab:gap_scaling}
    \begin{tabular}{lccc}
        \toprule
        \textbf{Context Length ($N$)} & \textbf{Avg. Primal Loss ($P(\hat d)$)} & 
        \textbf{Avg. Gap Upper Bound ($\overline{\Gamma}(\lambda^*)$)} & \textbf{Avg. Relative Gap (\%)} \\
        \midrule
        2,005 & 63.83 & 0.023 & 0.033\% \\
        4,752 & 71.26 & 0.010 & 0.015\% \\
        6,186 & 104.59 & 0.010 & 0.011\% \\
        % 7,942 & 124.97 & 0.200 & 0.145\% \\
        \bottomrule
    \end{tabular}
\end{table}

\begin{table}[h]
    \centering
    \caption{Layer-wise Statistics (at $N=6,186$).}
    \label{tab:layer_stats}
    \begin{tabular}{lccccc}
        \toprule
        \textbf{Metric} & \textbf{Min} & \textbf{25th Percentile} & \textbf{Median} & \textbf{75th Percentile} & \textbf{Max} \\
        \midrule
        \textbf{Sensitivity ($\lambda^*$)} & $1.3 \times 10^{-5}$ & $9.3 \times 10^{-5}$ & $1.7 \times 10^{-4}$ & $2.2 \times 10^{-4}$ & $4.2 \times 10^{-4}$ \\
        \textbf{Relative Gap (\%)} & 0.000\% & 0.000\% & 0.000\% & 0.010\% & 0.085\% \\
        \bottomrule
    \end{tabular}
\end{table}

%% file: Appendix/parameter.tex
\begin{figure}
    \centering
    \includegraphics[width=0.5\linewidth]{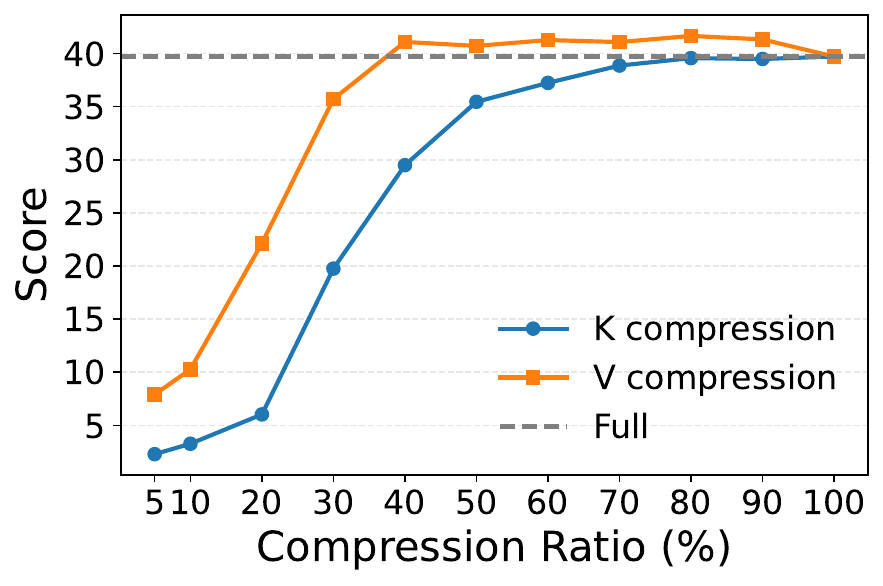}
    \caption{Performance of MultiFieldQA-en under different KV compression ratios. }
    \label{fig:kv-compression-curve}
\end{figure}

For most experiments of our solution, we set the candidate set of compression ratios to $\{0\%, 12.5\%, 25\%, 100\%\}$.
We evaluate the performance of KV dimensional reduction for Llama-3-8B-Instruct in one of the LongBench datasets (MultiFieldQA-en), where we vary the compression ratio for keys or values and report the scores.
As shown in \cref{fig:kv-compression-curve}, we observe a performance drop when the compression ratio is lower than $50\%$.
When selecting the optional compression ratios, the $0\%$ is selected to achieve low KV budget, and the $100\%$ is selected to maintain the important information.
Then We aim to select dimensions that yield substantial compression benefits while avoiding severe information loss caused by compressing important tokens along those dimensions.
The optional dimensions should be powers of two, and the number of candidate options is kept limited to avoid computational fragmentation.
We find that at $50\%$ the performance degradation is noticeable but not severe, making the cost of compression difficult to distinguish; however, compressing important tokens to this dimensionality can still incur non-negligible information loss.
Therefore, we skip $50\%$ and choose $25\%$ and $12.5\%$.

%% file: Appendix/longbench.tex
\subsection{More Results on LongBench}
\label{appendix:exp:longbench}

In \cref{tab:app:longbench} we supplement the experimental results with a KV size of 256 on the LongBench. 
The results show that our solution achieves a consistent performance improvement compared with H2O, SnapKV, and PyramidKV.
In \cref{tab:app:mdh-headkv} we present the full results of HeadKV and \optalg{} on LongBench.

\begin{table*}[t]
  \caption{Additional results on LongBench.}
  \label{tab:app:longbench}
  \setlength{\tabcolsep}{1.5pt}
  \newcommand\rot[1]{\rotatebox[origin=c]{45}{#1}}
  \centering
    \footnotesize
        \begin{tabular}{lccccccccccccccccc}
          \toprule
          \multirow{2}{*}[-2.5ex]{Method} & \multicolumn{3}{c}{Single-Doc QA} & \multicolumn{3}{c}{Multi-Doc QA} & \multicolumn{3}{c}{Summarization} & \multicolumn{3}{c}{Few-shot Learning} & \multicolumn{2}{c}{Synthetic} & \multicolumn{2}{c}{Code} & \multirow{2}{*}[-2.5ex]{Avg.} \\
            \cmidrule(lr){2-4} \cmidrule(lr){5-7} \cmidrule(lr){8-10} \cmidrule(lr){11-13} \cmidrule(lr){14-15} \cmidrule(lr){16-17}
            % \cline{2-14}
          & \rot{NQA} & \rot{Qasper} & \rot{MF-en} & \rot{HQA} & \rot{2WQA} & \rot{Musq} & \rot{GRep} & \rot{QMSum} & \rot{MNews} & \rot{TREC} & \rot{TQA} & \rot{SAMSum} & \rot{PCount} & \rot{PRe} & \rot{Lcc} & \rot{RB-P} & \\
          
          \midrule
            \multicolumn{18}{c}{\small \rule{0pt}{8pt}\textit{Llama-3-8B-Instruct KV size = 256}} \\ 
        \midrule

        Full & 25.56 & 32.30 & 39.71 & 43.56 & 35.49 & 21.14 & 28.58 & 23.22 & 26.67 & 74.00 & 90.48 & 42.29 & 4.80 & 69.75 & 59.13 & 54.02 & 41.92 \\
        H2O & 23.09 & 18.56 & 29.02 & 36.01 & 28.07 & 15.70 & 23.31 & 21.48 & 24.57 & 60.50 & 88.70 & 38.55 & 5.37 & 68.04 & 58.52 & 52.73 & 37.01 \\
        SnapKV & 23.36 & 20.2 & 37.36 & 42.37 & 33.18 & 19.96 & 21.76 & 22.00 & 22.83 & 71.00 & 90.86 & 40.03 & \textbf{5.83} & \textbf{69.50} & \textbf{60.16} & \textbf{55.91} & 39.77 \\
        Pyramid & 23.99 & 20.51 & 36.06 & 42.47 & 31.34 & 20.28 & 21.36 & 22.68 & 22.83 & 71.00 & 90.48 & 39.86 & \textbf{5.83} & 69.25 & 58.64 & 54.06 & 39.42 \\
        % HKV & 24.53 & 30.52 & 38.46 & 43.99 & 35.62 & 20.60 & 23.33 & 22.85 & 24.97 & 72.50 & 90.43 & 40.13 & \textbf{6.03} & 69.75 & 62.12 & \textbf{61.14} & 41.69 \\
        MD & \textbf{25.26} & \textbf{31.39} & \textbf{39.60} & \textbf{43.82} & \textbf{35.46} & \textbf{21.59} & \textbf{27.07} & \textbf{22.95} & \textbf{26.54} & \textbf{73.00} & \textbf{90.61} & \textbf{42.40} & 5.02 & \textbf{69.50} & 59.11 & 54.84 & \textbf{41.76} \\
        % MD-H & 24.85 & 30.13 & \textbf{40.45} & \textbf{44.22} & \textbf{36.24} & 20.74 & 26.04 & \textbf{23.01} & 26.05 & \textbf{73.00} & 90.56 & 41.32 & 5.64 & 69.50 & \textbf{62.44} & 59.84 & \textbf{42.13} \\

        \midrule

        \multicolumn{18}{c}{\small \rule{0pt}{8pt}\textit{Mistral-7B-Instruct KV size = 256}} \\ 

        \midrule

        Full & 29.07 & 41.54 & 52.88 & 49.37 & 39.01 & 28.58 & 35.07 & 25.71 & 27.73 & 76.00 &	88.59 & 47.51 & 6.00 & 98.50 & 61.48 & 62.68 & 48.11 \\
        H2O & 25.93 & 34.24 & 42.34 & 46.22 & 33.65 & 22.8 & 26.90 & 21.61 & 25.89 & 72.50 & 87.69 & 40.96 & 6.50 & 88.00 & 53.94 & 47.76 & 42.31 \\
        SnapKV & 27.72 & 33.18 & 51.59 & 47.58 & 37.60 & 27.73 & 23.98 & 23.12 & 23.21 & 73.00 & 88.79 & 44.88 & 5.00 & 96.00 & 58.05 & 58.12  & 44.97 \\
        Pyramid & 27.60 & 33.36 & 52.23 & 48.48 & 38.11 & 27.10 & 23.64 & 23.20 & 23.25 & 72.50 & \textbf{89.56} & 45.14 & 5.50 & 96.00 & 56.90 & 55.62 & 44.89 \\
        % HKV & 28.62 & 39.27 & 52.56 & \textbf{50.78} & \textbf{38.80} & \textbf{28.56} & 28.49 & 24.38 & 26.23 & \textbf{75.50} & 89.52 & 44.93 & 5.50 & 97.00 & 60.41 & 61.43 & 47.00 \\
        MD & \textbf{29.62} & \textbf{41.50} & \textbf{53.05} & \textbf{48.49} & \textbf{38.70} & \textbf{28.16} & \textbf{30.67} & \textbf{25.33} & \textbf{27.59} & \textbf{75.00} & 88.75 & \textbf{47.81} & \textbf{6.00} & \textbf{98.00} & \textbf{61.44} & \textbf{62.30} & \textbf{47.65} \\
        % MD-H & 29.34 & 39.96 & \textbf{53.14} & 49.60 & 38.78 & 28.32 & \textbf{30.84} & 24.73 & 27.30 & 75.00 & \textbf{89.86} & 46.41 & 5.50 & 97.50 & \textbf{61.66} & \textbf{62.65} & 47.54 \\
            
          \bottomrule
        \end{tabular}

\end{table*}

\begin{table*}[t]
    \caption{Full LongBench results for HeadKV and \optalg{} (MD-H).}
  \label{tab:app:mdh-headkv}
  \setlength{\tabcolsep}{1.5pt}
  \newcommand\rot[1]{\rotatebox[origin=c]{45}{#1}}
  \centering
    \footnotesize
        \begin{tabular}{lccccccccccccccccc}
          \toprule
          \multirow{2}{*}[-2.5ex]{Method} & \multicolumn{3}{c}{Single-Doc QA} & \multicolumn{3}{c}{Multi-Doc QA} & \multicolumn{3}{c}{Summarization} & \multicolumn{3}{c}{Few-shot Learning} & \multicolumn{2}{c}{Synthetic} & \multicolumn{2}{c}{Code} & \multirow{2}{*}[-2.5ex]{Avg.} \\
            \cmidrule(lr){2-4} \cmidrule(lr){5-7} \cmidrule(lr){8-10} \cmidrule(lr){11-13} \cmidrule(lr){14-15} \cmidrule(lr){16-17}
            % \cline{2-14}
          & \rot{NQA} & \rot{Qasper} & \rot{MF-en} & \rot{HQA} & \rot{2WQA} & \rot{Musq} & \rot{GRep} & \rot{QMSum} & \rot{MNews} & \rot{TREC} & \rot{TQA} & \rot{SAMSum} & \rot{PCount} & \rot{PRe} & \rot{Lcc} & \rot{RB-P} & \\
          
          \midrule
            \multicolumn{18}{c}{\small \rule{0pt}{8pt}\textit{Llama-3-8B-Instruct KV size = 128}} \\ 
        \midrule

        Full & 25.56 & 32.30 & 39.71 & 43.56 & 35.49 & 21.14 & 28.58 & 23.22 & 26.67 & 74.00 & 90.48 & 42.29 & 4.80 & 69.75 & 59.13 & 54.02 & 41.92 \\
        HeadKV & 22.56 & 28.04 & 39.64 & 44.40 & 31.83 & 20.62 & 22.16 & 23.04 & 24.04 & 71.00 & 90.10 & 38.81 & 4.89 & 68.75 & 61.72 & 61.77 & 40.84 \\
        MD-H & 24.59 & 27.15 & 41.39 & 43.59 & 35.10 & 20.59 & 23.43 & 22.74 & 25.56 & 70.00 & 90.34 & 40.51 & 5.08 & 69.50 & 62.30 & 59.69 & 41.35 \\

        \midrule
            \multicolumn{18}{c}{\small \rule{0pt}{8pt}\textit{Llama-3-8B-Instruct KV size = 256}} \\ 
        \midrule

        Full & 25.56 & 32.30 & 39.71 & 43.56 & 35.49 & 21.14 & 28.58 & 23.22 & 26.67 & 74.00 & 90.48 & 42.29 & 4.80 & 69.75 & 59.13 & 54.02 & 41.92 \\
        HeadKV & 24.53 & 30.52 & 38.46 & 43.99 & 35.62 & 20.60 & 23.33 & 22.85 & 24.97 & 72.50 & 90.43 & 40.13 & 6.03 & 69.75 & 62.12 & 61.14 & 41.69 \\
        MD-H & 24.85 & 30.13 & 40.45 & 44.22 & 36.24 & 20.74 & 26.04 & 23.01 & 26.05 & 73.00 & 90.56 & 41.32 & 5.64 & 69.50 & 62.44 & 59.84 & 42.13 \\

        \midrule
            \multicolumn{18}{c}{\small \rule{0pt}{8pt}\textit{Llama-3-8B-Instruct KV size = 512}} \\ 
        \midrule

        Full & 25.56 & 32.30 & 39.71 & 43.56 & 35.49 & 21.14 & 28.58 & 23.22 & 26.67 & 74.00 & 90.48 & 42.29 & 4.80 & 69.75 & 59.13 & 54.02 & 41.92 \\
        HeadKV & 24.83 & 29.85 & 38.06 & 44.30 & 36.34 & 22.17 & 24.70 & 23.14 & 26.03 & 73.50 & 90.56 & 40.82 & 5.66 & 69.50 & 62.36 & 60.66 & 42.03 \\
        MD-H & 25.37 & 31.96 & 39.82 & 43.97 & 36.91 & 21.39 & 27.61 & 23.35 & 26.24 & 73.50 & 90.64 & 41.83 & 5.27 & 69.50 & 62.02 & 58.29 & 42.35 \\

        \midrule

        \multicolumn{18}{c}{\small \rule{0pt}{8pt}\textit{Mistral-7B-Instruct KV size = 128}} \\ 

        \midrule

        Full & 29.07 & 41.54 & 52.88 & 49.37 & 39.01 & 28.58 & 35.07 & 25.71 & 27.73 & 76.00 &	88.59 & 47.51 & 6.00 & 98.50 & 61.48 & 62.68 & 48.11 \\
        HeadKV & 27.23 & 37.88 & 52.46 & 47.74 & 38.95 & 27.69 & 26.15 & 24.12 & 24.32 & 75.00 & 90.11 & 44.35 & 4.50 & 94.5 & 58.93 & 57.95 & 45.74 \\
        MD-H & 28.22 & 38.08 & 51.12 & 48.79 & 37.75 & 27.32 & 27.38 & 24.73 & 26.63 & 66.50 & 90.11 & 44.95 & 5.23 & 98.00 & 61.14 & 60.38 & 46.02 \\

        \midrule

        \multicolumn{18}{c}{\small \rule{0pt}{8pt}\textit{Mistral-7B-Instruct KV size = 256}} \\ 

        \midrule

        Full & 29.07 & 41.54 & 52.88 & 49.37 & 39.01 & 28.58 & 35.07 & 25.71 & 27.73 & 76.00 &	88.59 & 47.51 & 6.00 & 98.50 & 61.48 & 62.68 & 48.11 \\
        HeadKV & 28.62 & 39.27 & 52.56 & 50.78 & 38.80 & 28.56 & 28.49 & 24.38 & 26.23 & 75.50 & 89.52 & 44.93 & 5.50 & 97.00 & 60.41 & 61.43 & 47.00 \\
        MD-H & 29.34 & 39.96 & 53.14 & 49.60 & 38.78 & 28.32 & 30.84 & 24.73 & 27.30 & 75.00 & 89.86 & 46.41 & 5.50 & 97.50 & 61.66 & 62.65 & 47.54 \\

        \midrule

        \multicolumn{18}{c}{\small \rule{0pt}{8pt}\textit{Mistral-7B-Instruct KV size = 512}} \\ 

        \midrule

        Full & 29.07 & 41.54 & 52.88 & 49.37 & 39.01 & 28.58 & 35.07 & 25.71 & 27.73 & 76.00 &	88.59 & 47.51 & 6.00 & 98.50 & 61.48 & 62.68 & 48.11 \\
        HeadKV & 27.23 & 37.88 & 52.46 & 47.74 & 38.95 & 27.69 & 26.15 & 24.12 & 24.32 & 75.00 & 90.11 & 44.35 & 4.50 & 94.5 & 58.93 & 57.95 & 45.74 \\
        MD-H & 28.22 & 38.08 & 51.12 & 48.79 & 37.75 & 27.32 & 27.38 & 24.73 & 26.63 & 66.50 & 90.11 & 44.95 & 5.23 & 98.00 & 61.14 & 60.38 & 46.02 \\
            
          \bottomrule
        \end{tabular}

\end{table*}

%% file: Appendix/ruler.tex
\subsection{More Results on RULER}
\label{appendix:exp:ruler}

In \cref{tab:app:ruler} we present the full results on RULER.

\begin{table}[t]
  \caption{Full results of our solution and baselines on RULER.}
  \label{tab:app:ruler}
  \begin{center}
    \begin{small}
        \begin{tabular}{lcccccccccc}
          \toprule
          Method & S1 & S2 & MK1 & MK2 & MQ & MV & VT & QA-1 & QA-2 & Avg. \\

          \midrule
            \multicolumn{11}{c}{\small \rule{0pt}{8pt}\textit{Context Length = 8K}} \\ 
        \midrule

        Full & 100 & 100 & 99.4 & 99.4 & 99.25 & 83.4  & 96.64 & 69.83 & 58.4 & 89.59 \\
        H2O & 1.8 & 0.2 & 0.4 & 3 & 0.05 & 0.3 & 15.76 & 36.8 & 32.8 & 10.12\\
        SnapKV & \textbf{100} & 98.2 & 96.8 & 92 & 75.95 & 71.9 & 80.52 & 63.93 & 50 & 81.03\\
        PyramidKV & \textbf{100} & 98 & 96.8 & 93 & 74.6 & 72.05 & 84.16 & 60.28 & 50 & 80.99\\
        HeadKV & \textbf{100} & 99.2 & \textbf{99.2} & \textbf{99} & \textbf{98.85} & \textbf{94.75} & 89.92 & 65.6 & 55.2 & 89.08 \\
        \basicalg{} & 99.6 & 99.4 & 98.8 & 95.6 & 88.4 & 80.25 & \textbf{97.52} & 67.62 & 57.4 & 87.18\\
        \optalg{} & \textbf{100} & \textbf{99.6} & 98.4 & 97.4 & 96.5 & 89.7 & 97.28 & \textbf{68.78} & \textbf{57.6} & \textbf{89.47}\\

        \midrule

        \multicolumn{11}{c}{\small \rule{0pt}{8pt}\textit{Context Length = 32K}} \\ 
        \midrule

        Full & 99.60 & 	98 & 98 & 97 & 46.6 & 94.15 & 22.28 & 69.78 & 53.2 & 75.40 \\
        H2O & 0.8 & 0 & 0 & 0 & 0 & 0 & 4.04 & 24.2 & 23.6 & 5.85 \\
        SnapKV & \textbf{100} & 85.8 & 89.8 & 13.6 & 7.25 & 16.5 & 20.88 & 62.77 & 45.8 & 49.16\\
        PyramidKV & \textbf{100} & 80.6 & 78.8 & 14.4 & 3.69 & 5.05 & 20.48 & 60.27 & 44.6 & 45.31 \\
        HeadKV & \textbf{100} & 90.6 & 92 & \textbf{44} & 24.8 & 56.5 & 20.92 & 64.27 & 50.8 & 60.43 \\
        \basicalg{} & 98.40 & 85.60 & 92.4 & 8.4 & 7.75 & 17.1 & 21.24 & \textbf{66.47} & 51.2 & 49.84\\
        \optalg{} & \textbf{100} & \textbf{95.2} & \textbf{96} & 38.4 & \textbf{31.05} & \textbf{64.35} & \textbf{21.6} & 65.80 & \textbf{51.4} & \textbf{62.64}\\

        % \multicolumn{11}{c}{\small \rule{0pt}{8pt}\textit{Mistral-7B-Instruct}} \\ 

        % \midrule

        % Full & \\
        % H2O & \\
        % SnapKV & \\
        % PyramidKV & \\
        % Palu & \\
        % HeadKV & \\
        % MD-SnapKV & \\
        % MD-HeadKV & \\
            
          \bottomrule
        \end{tabular}
    \end{small}
  \end{center}
  % \vskip -0.1in
\end{table}

%% file: Appendix/palu.tex
\subsection{Comparison with Dimension Reduction Methods}
\label{appendix:exp:palu}

We benchmark our approach against dimension reduction-based methods to provide a comprehensive evaluation. 
We select \textbf{Palu}~\cite{palu} as the representative baseline for this category due to its clear implementation and reproducibility.
Unlike eviction strategies that discard tokens, Palu performs low-rank decomposition on the linear projection matrices ($W_k, W_v$), effectively reducing the hidden dimension of the KV cache while retaining all tokens.

\paragraph{Setup.}
To ensure a fair comparison, we align the total memory footprint of the KV cache for both methods. 
Specifically, we set the token budget of our \basicalg{} and \optalg{} variants to $10\%$ of the prefill sequence length. 
Correspondingly, we adjust the projection dimension in Palu to a specific ratio that yields an identical KV cache size (i.e., a $40\%$ projection dimension).

\paragraph{Results.}
As shown in \cref{tab:palu}, Palu experiences a significant performance degradation at this compression level. In contrast, our solution demonstrates superior resilience, maintaining performance comparable to the full-cache baseline.

\begin{table*}[t]
  \caption{Comparative analysis with PALU on LongBench.}
  \label{tab:palu}
  \setlength{\tabcolsep}{1.5pt}
  \newcommand\rot[1]{\rotatebox[origin=c]{45}{#1}}
  \centering
  \footnotesize
    \begin{tabular}{lccccccccccccccccc}
      \toprule
      \multirow{2}{*}[-2.5ex]{Method} & \multicolumn{3}{c}{Single-Doc QA} & \multicolumn{3}{c}{Multi-Doc QA} & \multicolumn{3}{c}{Summarization} & \multicolumn{3}{c}{Few-shot Learning} & \multicolumn{2}{c}{Synthetic} & \multicolumn{2}{c}{Code} & \multirow{2}{*}[-2.5ex]{Avg.} \\
        \cmidrule(lr){2-4} \cmidrule(lr){5-7} \cmidrule(lr){8-10} \cmidrule(lr){11-13} \cmidrule(lr){14-15} \cmidrule(lr){16-17}
      & \rot{NQA} & \rot{Qasper} & \rot{MF-en} & \rot{HQA} & \rot{2WQA} & \rot{Musq} & \rot{GRep} & \rot{QMSum} & \rot{MNews} & \rot{TREC} & \rot{TQA} & \rot{SAMSum} & \rot{PCount} & \rot{PRe} & \rot{Lcc} & \rot{RB-P} & \\
      
      % \midrule
      % \multicolumn{18}{c}{\small \rule{0pt}{8pt}\textit{Llama-3-8B-Instruct Budget = 20\%}} \\ 
      % \midrule

    % Full & 25.56 & 32.30 & 39.71 & 43.56 & 35.49 & 21.14 & 28.58 & 23.22 & 26.67 & 74.00 & 90.48 & 42.29 & 4.80 & 69.75 & 59.13 & 54.02 & 41.92 \\
    % Palu & 24.31 & \textbf{32.67} & 36.02 & 36.37 & 26.81 & 18.26 & \textbf{28.65} & \textbf{23.78} & \textbf{26.63} & \textbf{74.00} & 89.79 & 42.18 & 4.50 & 67.00 & 54.94 & 49.40 & 39.71 \\
    % MD & \textbf{25.59} & 30.65 & \textbf{39.99} & 43.59 & 35.08 & \textbf{21.29} & 28.27 & 23.43 & 25.87 & 73.50 & 90.48 & \textbf{42.65} & \textbf{4.90} & \textbf{69.25} & 59.25 & 53.99 & 41.74 \\
    % MD-Head & 25.11 & 32.19 & 39.58 & \textbf{43.69} & \textbf{35.55} & 21.02 & 28.39 & 23.12 & 26.18 & 73.50 & \textbf{90.56} & 42.05 & 4.66 & \textbf{69.25} & \textbf{61.29} & \textbf{56.59} & \textbf{42.05} \\

    \midrule
    \multicolumn{18}{c}{\small \rule{0pt}{8pt}\textit{Llama-3-8B-Instruct Budget = 10\%}} \\ 
    \midrule

    Full & 25.56 & 32.30 & 39.71 & 43.56 & 35.49 & 21.14 & 28.58 & 23.22 & 26.67 & 74.00 & 90.48 & 42.29 & 4.80 & 69.75 & 59.13 & 54.02 & 41.92 \\
    Palu & 15.44 & 9.32 & 19.22 & 17.82 & 16.87 & 8.65 & 9.74 & 20.21 & 14.08 & 45.00 & 22.89 & 23.32 & 1.59 & 4.35 & 7.82 & 9.07 & 15.34 \\
    MD & 25.22 & 28.55 & 38.67 & 43.39 & 32.42 & 21.13 & 25.72 & 23.12 & \textbf{25.96} & 72.00 & 90.48 & \textbf{42.74} & \textbf{4.90} & 69.25 & 59.16 & 54.23 & 41.06 \\
    MD-H & \textbf{25.32} & \textbf{32.07} & \textbf{39.96} & \textbf{43.91} & \textbf{36.38} & \textbf{21.48} & \textbf{27.39} & \textbf{23.25} & 25.95 & \textbf{73.50} & \textbf{90.56} & 41.84 & 4.81 & \textbf{69.50} & \textbf{61.86} & \textbf{57.19} & \textbf{42.19} \\

    % \midrule
    % \multicolumn{18}{c}{\small \rule{0pt}{8pt}\textit{Mistral-7B-Instruct Budget = 20\%}} \\ 
    % \midrule

    % Full & 29.07 & 41.54 & 52.88 & 49.37 & 39.01 & 28.58 & 35.07 & 25.71 & 27.73 & 76.00 & 88.59 & 47.51 & 6.00 & 98.50 & 61.48 & 62.68 & 48.11 \\
    % Palu & \textbf{29.19} & 40.54 & \textbf{53.67} & \textbf{51.31} & \textbf{40.01} & \textbf{29.95} & 34.13 & \textbf{25.73} & \textbf{27.60} & 73.50 & 87.49 & 46.09 & 5.00 & 95.50 & 58.59 & 60.54 & 47.43 \\
    % MD & \textbf{29.19} & \textbf{41.76} & 53.23 & 49.44 & 39.21 & 28.45 & 33.53 & 25.40 & 27.22 & \textbf{76.00} & 88.59 & 47.17 & 5.50 & \textbf{99.00} & 61.58 & \textbf{62.66} & 48.00 \\
    % MD-Head & 29.15 & 40.79 & 53.66 & 49.78 & 39.01 & 28.57 & \textbf{34.64} & 25.55 & 27.51 & \textbf{76.00} & \textbf{88.64} & \textbf{47.68} & \textbf{6.00} & 98.50 & \textbf{61.51} & 62.60 & \textbf{48.10} \\

    \midrule
    \multicolumn{18}{c}{\small \rule{0pt}{8pt}\textit{Mistral-7B-Instruct Budget = 10\%}} \\ 
    \midrule

    Full & 29.07 & 41.54 & 52.88 & 49.37 & 39.01 & 28.58 & 35.07 & 25.71 & 27.73 & 76.00 & 88.59 & 47.51 & 6.00 & 98.50 & 61.48 & 62.68 & 48.11 \\
    Palu & 22.12 & 31.68 & 42.54 & 37.03 & 28.45 & 20.08 & 18.66 & 22.90 & 23.98 & 70.00 & 80.96 & 35.66 & 4.50 & 22.00 & 33.72 & 33.28 & 32.97 \\
    MD & 29.12 & 38.12 & 52.89 & 49.01 & \textbf{39.48} & 28.24 & 30.81 & 25.60 & 25.89 & 74.00 & 88.59 & \textbf{47.30} & \textbf{6.00} & 98.50 & \textbf{61.75} & 62.54 & 47.37 \\
    MD-H & \textbf{29.16} & \textbf{41.00} & \textbf{53.84} & \textbf{49.58} & 38.96 & \textbf{28.48} & \textbf{34.10} & \textbf{25.78} & \textbf{27.03} & \textbf{76.00} & \textbf{88.59} & 47.18 & \textbf{6.00} & \textbf{98.50} & 61.56 & \textbf{62.54} & \textbf{48.02} \\

      \bottomrule
    \end{tabular}
\end{table*}

%% file: Appendix/niah.tex
\subsection{Needle-in-a-Haystack tests on Mistral}
\label{appendix:exp:niah-mistral}

\begin{figure*}[t]
    \centering

    \begin{subfigure}{0.49\linewidth}
        \centering
        \includegraphics[width=\linewidth]{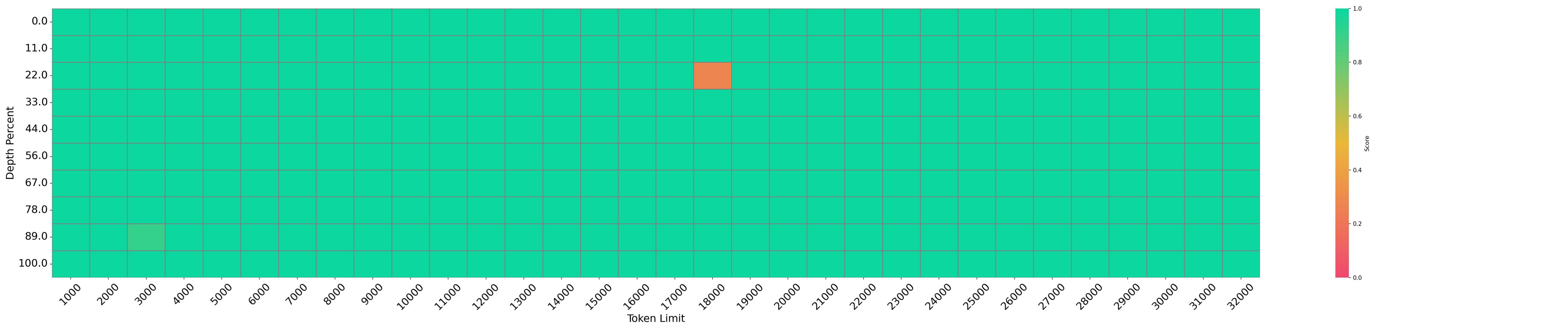}
        \caption{FullKV Score: 99.70}
    \end{subfigure}
    \begin{subfigure}{0.49\linewidth}
        \centering
        \includegraphics[width=\linewidth]{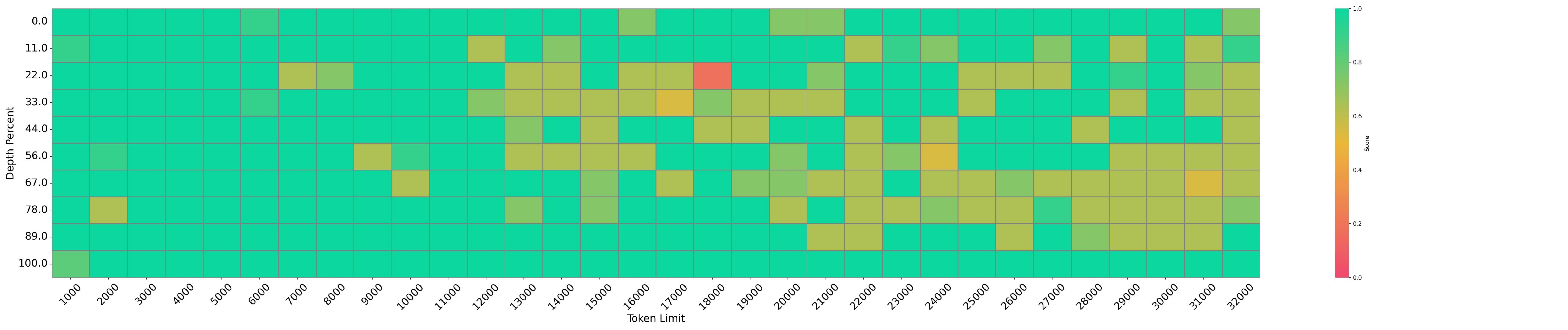}
        \caption{SnapKV Score: 89.20}
    \end{subfigure}

    \begin{subfigure}{0.49\linewidth}
        \centering
        \includegraphics[width=\linewidth]{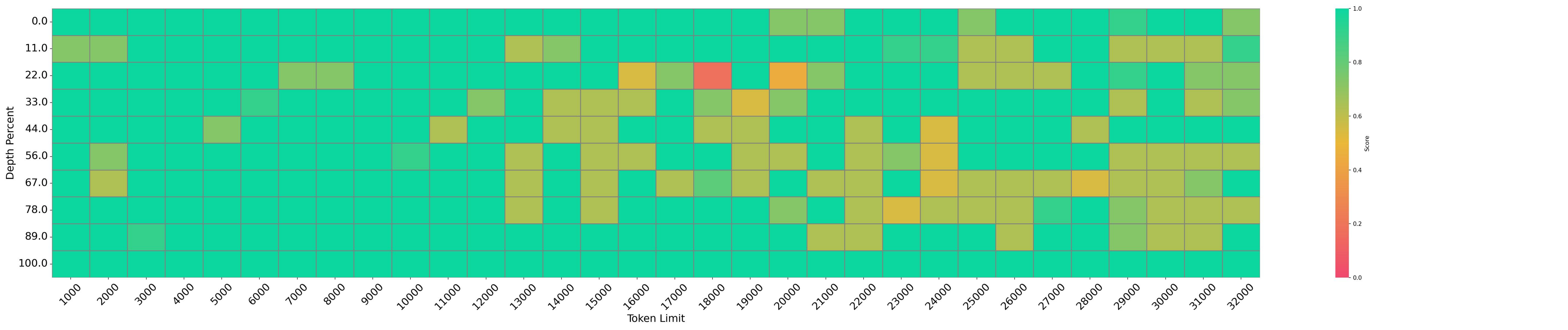}
        \caption{PyramidKV Score: 89.70}
    \end{subfigure}
    \begin{subfigure}{0.49\linewidth}
        \centering
        \includegraphics[width=\linewidth]{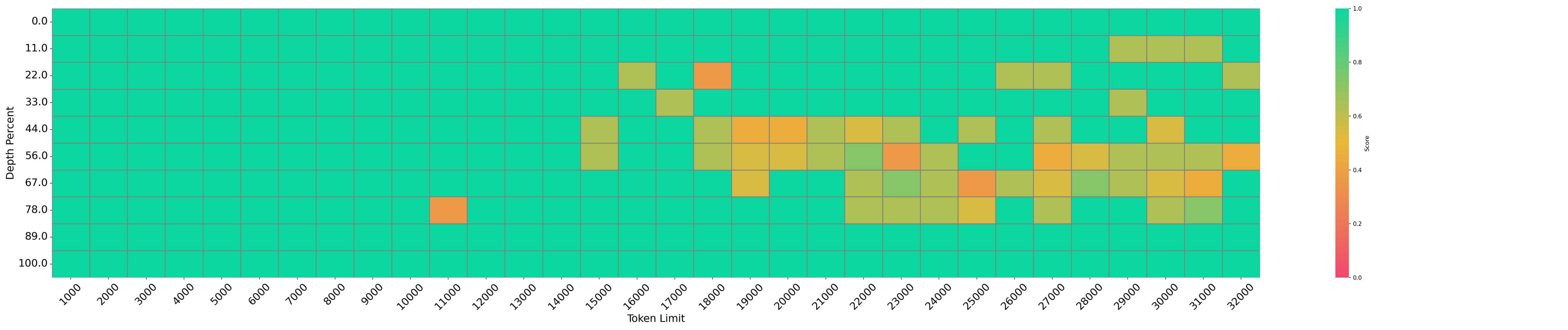}
        \caption{HeadKV Score: 93.20}
    \end{subfigure}

    \begin{subfigure}{0.49\linewidth}
        \centering
        \includegraphics[width=\linewidth]{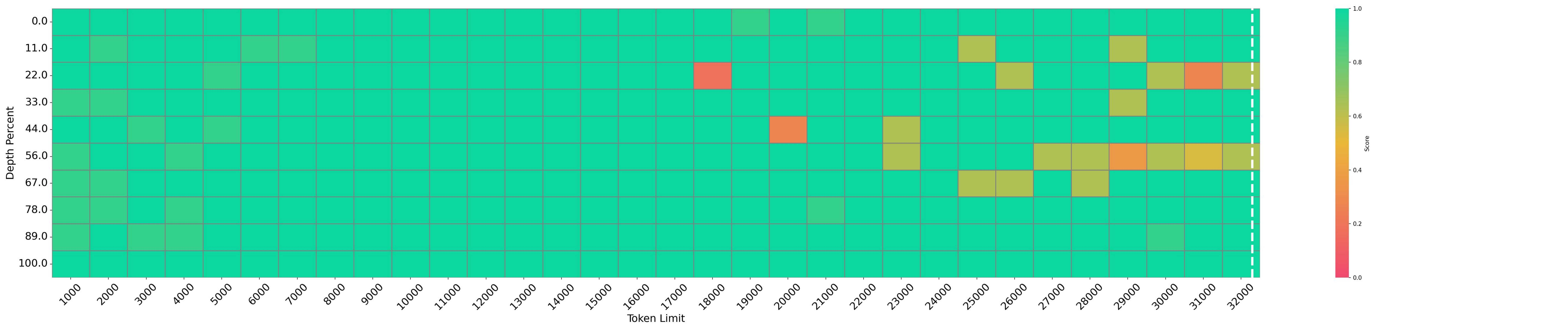}
        \caption{\basicalg{} Score: 96.60}
    \end{subfigure}
    \begin{subfigure}{0.49\linewidth}
        \centering
        \includegraphics[width=\linewidth]{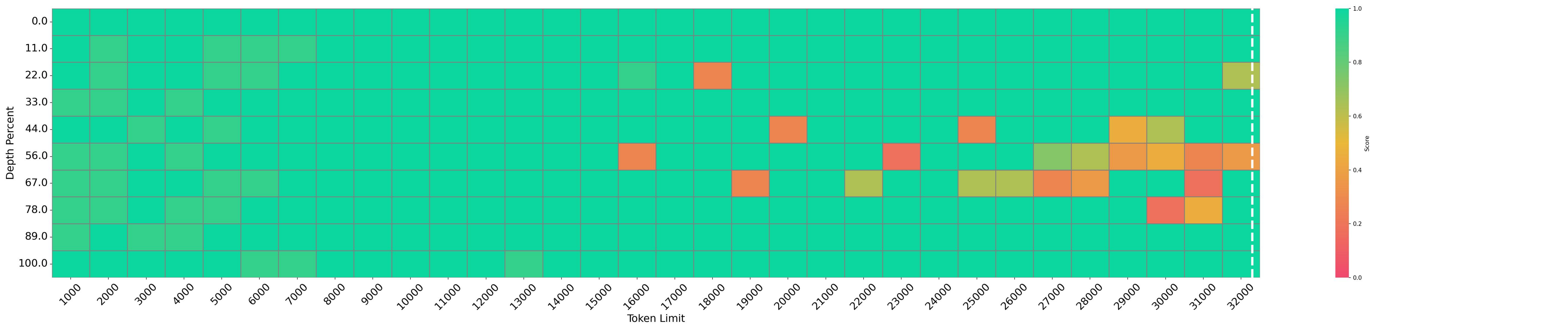}
        \caption{\optalg{} Score: 94.90}
    \end{subfigure}

    \caption{Results of Needle-in-a-Haystack test on Mistral-7B-Instruct with KV size $128$.}
    \label{fig:app:niah:mistral}
\end{figure*}

We evaluate the Needle-in-a-Haystack test on the Mistral-7B-Instruct-v0.2 model.
We set the maximum context length to 32K, the KV size to $128$, and the query window size to $16$.
As shown in \cref{fig:app:niah:mistral}, \basicalg{} significantly outperforms the baselines that does not relies on additional head-level information (SnapKV, PyramidKV).
Equipped with the same head-level information, \optalg{} also outperforms HeadKV.
We further find that in Mistral \basicalg{} achieves a better score compared with HeadKV and \optalg{}, which indicates that HeadKV still has limitations in evaluating the importance of attention heads. 
% We attribute this deficiency to the fact that HeadKV measures head importance via a needle-in-a-haystack evaluation: although it conducts comprehensive assessments using extensive datasets, it fails to identify which heads are more critical within the current KV cache context. 
% In contrast, our method can partially reveal the relative importance of heads in the current KV cache. A more promising strategy would be to combine the strengths of both approaches, which we leave as future work.

%% file: Appendix/allocation.tex
\subsection{More Results on Dimension Allocation}
\label{appendix:exp:dim-alloc}

For most experiments, the candidate set of dimension compression ratios is set to $\{0\%, 12.5\%, 25\%, 100\%\}$.
We report the average token length and the proportion of tokens allocated to each compression ratio, averaged over all layers and samples in each dataset.
\cref{fig:app:dim-alloc} presents the results on different LongBench datasets.
We find that all four compression ratios take effect by allocating a portion of tokens.
Specifically, tokens are predominantly allocated to the $0\%$ and $100\%$ compression ratios, while the two intermediate ratios account for a non-negligible 
$10\%$-$15\%$ of the total tokens, which improves the space utilization of the KV cache.
Notably, in longer datasets, an even larger proportion of tokens are assigned to the $0\%$ ratio, which is a direct consequence of the constrained budget setting.

\begin{figure}[t]
    \centering

    \begin{subfigure}{0.48\linewidth}
        \centering
        \includegraphics[width=\linewidth]{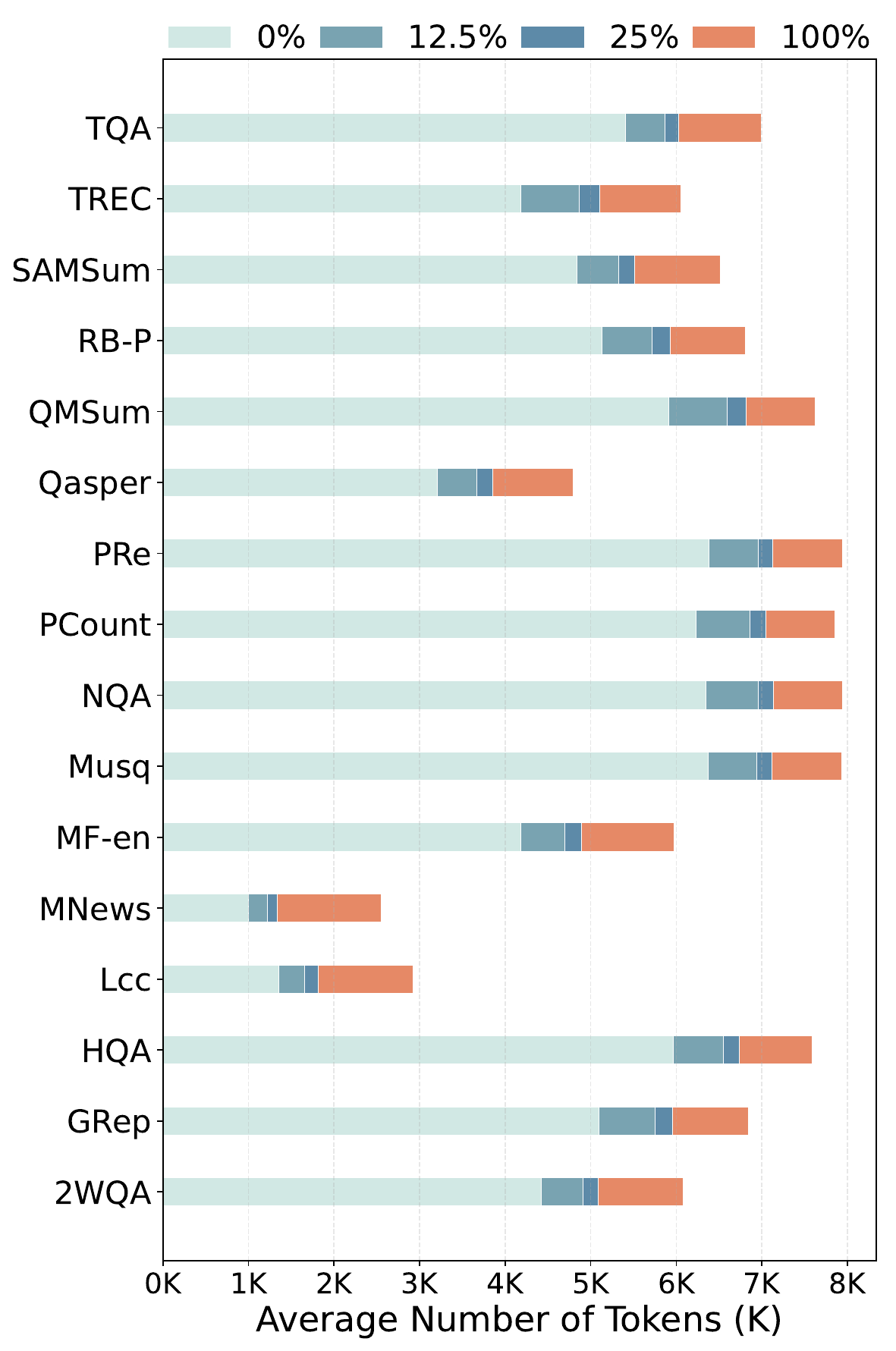}
        \caption{KV size = $256$}
        \label{fig:app:dim-alloc:256}
    \end{subfigure} \hfill
    \begin{subfigure}{0.48\linewidth}
        \centering
        \includegraphics[width=\linewidth]{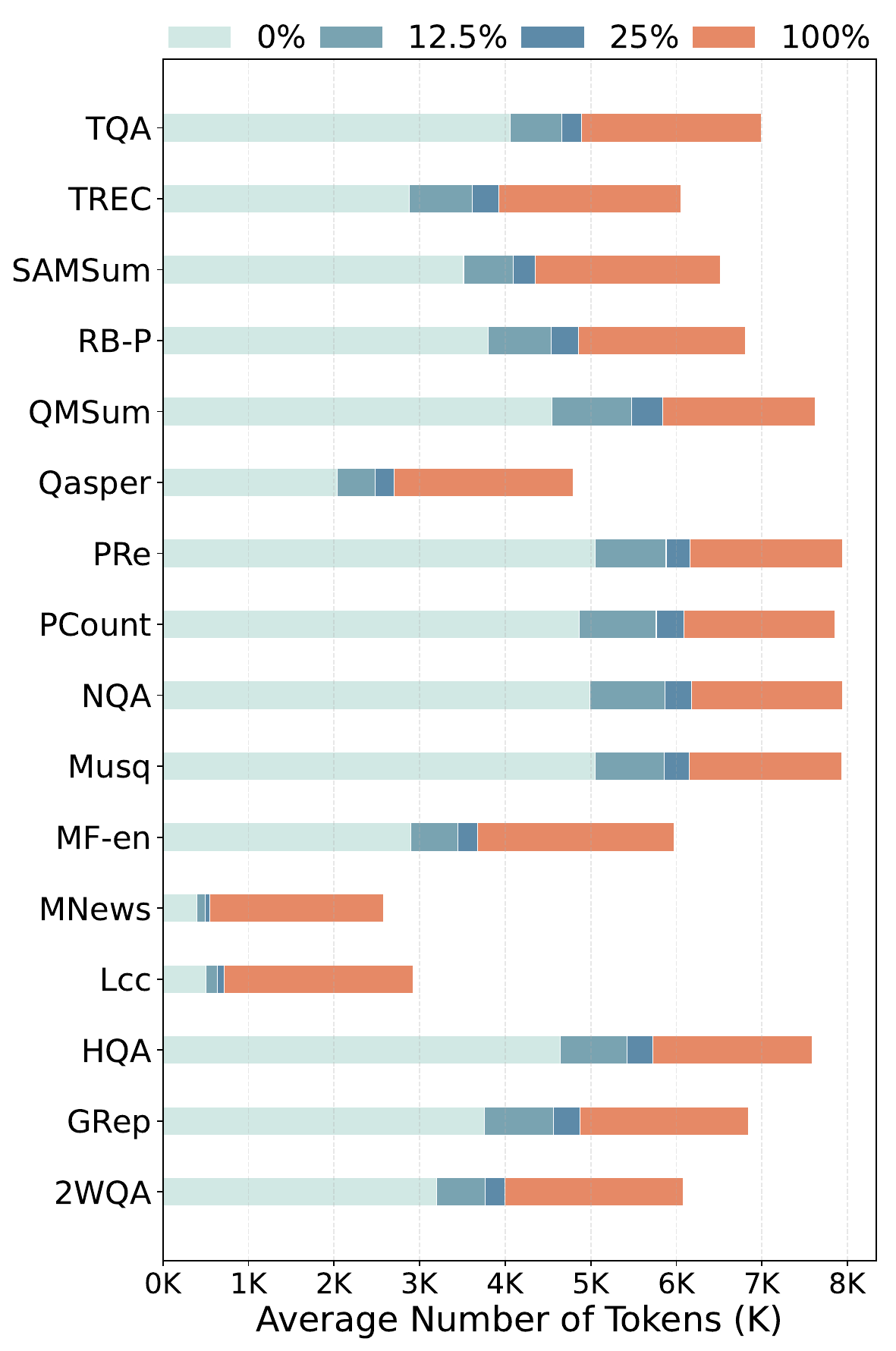}
        \caption{KV size = $512$}
        \label{fig:app:dim-alloc:512}
    \end{subfigure}

    \caption{Average number of tokens and the allocation statistics across candidate compression ratios on LongBench.}
    \label{fig:app:dim-alloc}
\end{figure}